\newcommand{\vx}{\mathbf{x}}
\newcommand{\vy}{\mathbf{y}}
\newcommand{\mA}{\mathbf{A}}
\newcommand{\mB}{\mathbf{B}}
\newcommand{\mC}{\mathbf{C}}
\newcommand{\mF}{\mathbf{F}}
\newcommand{\mG}{\mathbf{G}}
\newcommand{\mH}{\mathbf{H}}
\newcommand{\mI}{\mathbf{I}}
\newcommand{\mK}{\mathbf{K}}
\newcommand{\mO}{\mathbf{O}}
\newcommand{\mQ}{\mathbf{Q}}
\newcommand{\mR}{\mathbf{R}}
\newcommand{\mU}{\mathbf{U}}
\newcommand{\mV}{\mathbf{V}}
\newcommand{\mW}{\mathbf{W}}
\newcommand{\mX}{\mathbf{X}}
\newcommand{\mGamma}{\mathbf{\Gamma}}
\newcommand{\mLambda}{\mathbf{\Lambda}}
\newcommand{\mSigma}{\mathbf{\Sigma}}
\newcommand{\vdelta}{\boldsymbol{\delta}}
\newcommand{\vtheta}{\boldsymbol{\theta}}
\newcommand{\vmu}{\boldsymbol{\mu}}
\newcommand{\vsigma}{\boldsymbol{\sigma}}
\newcommand{\vpsi}{\boldsymbol{\psi}}
\newcommand{\sE}{\mathbb{E}} 
\newcommand{\sR}{\mathbb{R}} 
\newcommand{\mathD}{\mathcal{D}} 
\newcommand{\mathL}{\mathcal{L}} 
\newcommand{\mathN}{\mathcal{N}}
\newcommand{\mathO}{\mathcal{O}} 
\newtheorem{theorem}{Theorem}[section]
\newtheorem{lemma}[theorem]{Lemma}
\newtheorem{corollary}[theorem]{Corollary}
\newtheorem{remark}[theorem]{Remark}
\newtheorem{assumption}[theorem]{Assumption}
\newcommand\norm[1]{\left\lVert#1\right\rVert}
\begin{document}

%

%

\twocolumn[

\aistatstitle{Gradient Regularized Natural Gradients}


\aistatsauthor{
  \makebox[\textwidth][c]{
    Satya Prakash Dash$^{1,*}$ \hspace{0.6em} 
    Hossein Abdi$^{1,*}$ \hspace{0.6em} 
    Wei Pan$^{1}$ \hspace{0.6em} 
    Samuel Kaski$^{1,2}$ \hspace{0.6em} 
    Mingfei Sun$^{1}$
  }
}

\aistatsaddress{ $^{1}$ Department of Computer Science, The University of Manchester, United Kingdom
\\
$^{2}$ Department of Computer Science, Aalto University, Finland
\\
\{satyaprakash.dash, hossein.abdi, wei.pan, samuel.kaski, mingfei.sun\}@manchester.ac.uk
} ]

\begingroup
\renewcommand\thefootnote{}
\footnotetext{$^{*}$ Equal contribution.}
\endgroup

\begin{abstract}

Gradient regularization (GR) has been shown to improve the generalizability of trained models. While Natural Gradient Descent has been shown to accelerate optimization in the initial phase of training, little attention has been paid to how the training dynamics of second-order optimizers can benefit from GR. In this work, we propose Gradient-Regularized Natural Gradients (GRNG), a family of scalable second-order optimizers that integrate explicit gradient regularization with natural gradient updates. Our framework introduces two frequentist algorithms: Regularized Explicit Natural Gradient (RENG), which utilizes double backpropagation to explicitly minimize the gradient norm, and Regularized Implicit Natural Gradient (RING), which incorporates regularization implicitly into the update direction. We also propose a Bayesian variant based on a Regularized-Kalman formulation that eliminates the need for FIM inversion entirely. We establish convergence guarantees for GRNG, showing that gradient regularization improves stability and enables convergence to global minima. Empirically, we demonstrate that GRNG consistently enhances both optimization speed and generalization compared to first-order methods (SGD, AdamW) and second-order baselines (K-FAC, Sophia), with strong results on vision and language benchmarks. 




\end{abstract}

\section{Introduction}


The rapid advancement of deep neural networks (DNNs) in recent years has driven significant progress across a wide range of domains, including computer vision and natural language processing. Despite these successes, the design of optimization algorithms that simultaneously ensure fast convergence and improved generalization in the training of these large-scale models remains a fundamental challenge \citep{bottou2018optimization}.

Recently, Natural Gradient Descent (NGD) methods have attracted increasing attention in the context of training DNNs. This interest is largely motivated by their favorable theoretical properties—most notably, their ability to identify optimal solutions in as few as $\mathcal{O}(1)$ iterations under certain conditions \citet{zhang2019fast}. In addition, NGD methods are well-suited for addressing ill-conditioned optimization landscapes, where they can provide significant advantages over traditional approaches by accelerating convergence and improving training stability \citet{becker1988improving, martens2020new}.



Gradient Regularization (GR) has been extensively investigated in the context of gradient descent \citet{smith2021on, barrett2020igr}. GR-based methods have been shown to accelerate convergence in the initial phase \citet{jastrzebski2021catastrophic} and have additionally been associated with improved generalization of the trained models \citet{smith2021origin}.
Theoretically, it has been shown that employing higher learning rates induces an implicit GR effect. Complementarily, explicit regularization of the objective function via the gradient norm has been empirically demonstrated to bias the optimization trajectory toward finding flatter regions of the loss landscape \citet{drucker1992improving} \citet{barrett2020igr}. This, in turn, has been linked to enhanced generalization performance and improved robustness of trained models.

Despite the extensive body of work on gradient regularization in the context of gradient descent, relatively little attention has been devoted to (i) understanding its convergence properties when combined with natural gradient optimization methods, and (ii) developing empirical techniques to assess its impact on the generalization performance of NGD optimizers. A more extensive review of related work is provided in \autoref{Related work}.

\begin{figure}[h]
    \includegraphics[width=0.45\textwidth]{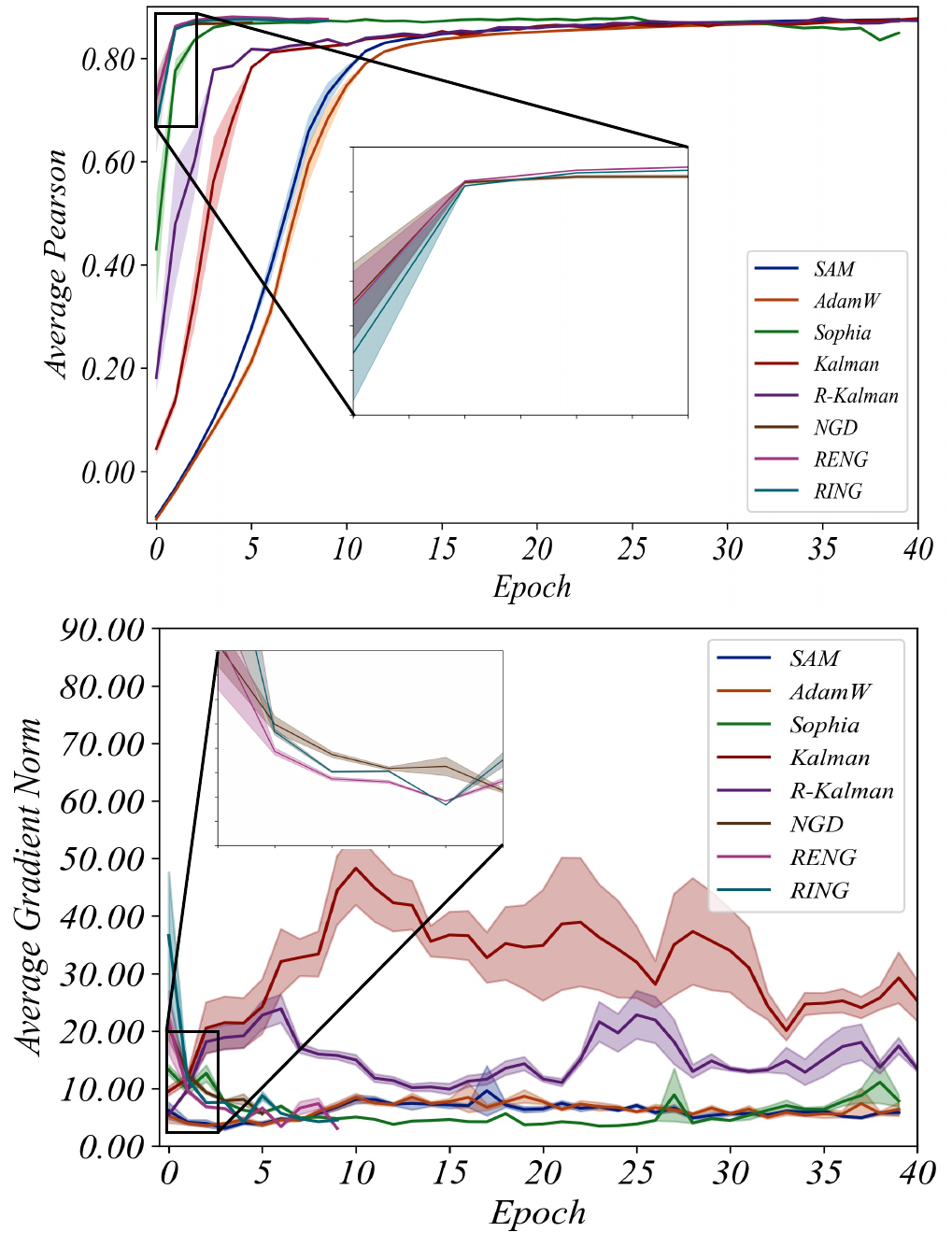}
    \caption{Average Gradient Norm (left) and Average Pearson Correlation (right) for SAM, AdamW, Sophia, Kalman, R-Kalman, RING, and RENG on the STS-B dataset.}
    \label{fig:intro_fig}
\end{figure}



In this work, we investigate the convergence properties of Gradient-Regularized Natural Gradients (GRNG). We provide a theoretical analysis showing that, for a simple two-layer neural network, GRNG converges to the global minimum. Beyond theory, we empirically evaluate the generalization performance of GRNG in comparison with a range of established optimizers, including first-order methods such as Stochastic Gradient Descent (SGD) and Sharpness-Aware Minimization (SAM) \citet{foret2020sharpness}, quasi-second-order methods such as Adam \citet{kingma2014adam}, and second-order baselines without gradient regularization, such as K-FAC with Tikhonov regularization \citet{martens2015optimizing}, and Sophia \citep{liu2023sophia}. Our framework further introduces two frequentist algorithms (see Algorithm \ref{alg:ring-reng}): RENG and RING. RENG employs double backpropagation to explicitly regularize the objective, whereas RING avoids this overhead by estimating an update direction that reflects the regularization implicitly. These are complemented by a Bayesian variant based on a Regularized-Kalman formulation that eliminates the need for FIM inversion entirely. Together, these advances make GRNG scalable and applicable to large-scale architectures, including Transformers in both language and vision domains.



In \ref{fig:intro_fig}, we illustrate the advantages of incorporating gradient regularization. SAM, a gradient-regularized first-order optimizer, is able to locate flatter minima more efficiently than a quasi-second-order method such as AdamW, highlighting the effectiveness of gradient regularization. Furthermore, our proposed GRNG methods (Algorithms \ref{alg:ring-reng} and \ref{alg:kalman}) achieve a rapid reduction in gradient norm compared to their unregularized counterparts, and converges within a few epochs of training.


\section{Problem Statement}

Formally, suppose we are given a dataset $\mathD = \{(\vx_k, \vy_k) \sim p^*(\vx,\vy) \}_{k=1}^{N}$, where $\vx_k \in \sR^{d_i}$ and $\vy_k \in \sR^{d_o}$ denote the input and output vectors, respectively; and are sampled i.i.d. from the true data distribution $p^\ast(\vx,\vy)$. 
Consider a (possibly pre-trained) model defined by $\hat{\vy} = h(\vx, \vtheta)$, which is parameterized by $\vtheta \in \sR^n$. 
Given an input $\vx$, the model defines a predictive distribution $p(\vy|\vx,\vtheta)$, from which the output $\vy$ can be sampled, and $\hat{\vy}$ is a representative statistic of this distribution.
The objective of prediction in machine learning is to determine model parameters that make the model's predicted distribution $p(\vy|\vx, \vtheta)$ closely approximate the true conditional data distribution $p^*(\vy|\vx)$. Specifically, prediction aims to learn a mapping from inputs $\vx$ to outputs $\vy$, in order to predict the outputs for new and unseen inputs.
This mapping is learned by training or fine-tuning the model $\hat{\vy} = h(\vx, \vtheta)$. In the following, we introduce that both frequentist and Bayesian optimization frameworks can be employed in this context.


\paragraph{Frequentist Approach.}
In Frequentist approaches, this objective is typically achieved by minimizing the Kullback–Leibler (KL) divergence between the true conditional distribution and the model distribution, $D_{KL}(p^*(\vy|\vx)||p(\vy|\vx, \vtheta))$, which is equivalent to minimizing the negative log-likelihood loss: $\mathcal{L}(\vtheta) = -\sE_{p^\ast(\vx,\vy)}[\ln p(\vy|\vx,\vtheta)]$ \citep{goodfellow2016deep, murphy2012machine, bishop2006pattern}.
When the update step ($\vdelta$) is infinitesimally small, its second-order approximation is given as
%
%
\begin{equation}
  \label{eq:lossdelta}
  \mathcal{L}(\vtheta + \vdelta) =  \mathcal{L}(\vtheta) + \nabla_{\vtheta} \mathcal{L}(\vtheta)^\top \vdelta + \frac{1}{2} \vdelta^{\top} \mF(\vtheta) \vdelta + \mathO(|\vdelta|^3), 
\end{equation} 
where $\vdelta$ is the parameter update and the curvature of the loss function is given by the Fisher information matrix $\mF(\vtheta)$~\citep{pascanu2013revisiting}, defined as follows:
%
%
\begin{equation}
  \label{eq:fisher}
  \mF(\vtheta) = \sE_{p(\vy|\vx,\vtheta)}[ \nabla_{\vtheta} \ln p(\vy|\vx,\vtheta) \cdot \nabla_{\vtheta} \ln p(\vy|\vx,\vtheta)^{\top} ].
\end{equation}
Solving the quadratic function of \autoref{eq:lossdelta} in the vicinity of $\vtheta$ gives the natural gradient direction: $\vdelta^{\ast} = -\mF(\vtheta)^{-1} \nabla_{\vtheta} \mathcal{L}(\vtheta)$, where the gradient of the loss function is with respect to the \emph{data distribution} (also called batch-gradients).
Note that Fisher information matrix takes the expectation over the \emph{model distributions}.


\paragraph{Bayesian Approach.}

From a Bayesian perspective, the parameter $\vtheta$ is treated as a random variable. 
Given a data point $(\vx_k,\vy_k)$ in a stream of data, we aim to recursively estimate the posterior distribution of the parameters using Bayesian inference \citep{murphy2023probabilistic, murphy2012machine}:
\begin{equation} \label{Eq. posterior}
    p(\vtheta_k \mid \mathD_{1:k}) \propto p(\vy_k \mid \vx_k , \vtheta_{k-1}) p(\vtheta_{k-1} \mid \mathD_{1:k-1}),
\end{equation}
where $\mathD_{1:k} \subseteq \mathD$ indicates the subset of dataset used for training until $k^{th}$ data. 
And $p(\vy_k \mid \vx_k , \vtheta_{k-1})$ represents the likelihood function, $p(\vtheta_{k-1} \mid \mathD_{1:k-1})$ and $p(\vtheta_k \mid \mathD_{1:k})$ denote the prior and posterior, respectively.
By leveraging a Kalman-based approach, we recursively update and compute the posterior distribution over the model parameters.








\section{Methods}
Direct inversion of the full Fisher Information Matrix is computationally intractable in large-scale settings. To address this challenge, we adopt two strategies: in the frequentist approach, we employ block-diagonal Kronecker-factored approximations of the FIM, combined with Newton’s Iteration and the lazy-Fisher technique; in the Bayesian approach, we bypass the need for FIM inversion entirely by leveraging a Kalman-based technique.

\subsection{Frequentist: Inverting Fisher Layerwise and Lazily}

\paragraph{Kronecker-Factored Approximation.}
We first take the frequentist approach to address the computational challenge.
We develop our algorithm around the layer-wise structure of the model's weight matrices. For simplicity, we denote the weight matrix of layer $i$ as $\mW_i \in \sR^{\omega_i \times \omega_i^{'}}$, where $\omega_i$ and $\omega_i'$ represent the input and output dimensions of the layer, respectively. The full parameter vector $\vtheta$ is then defined as the concatenation of the vectorized layer weights: 
$\vtheta = \text{vec}(\mW_1, \mW_2, ... , \mW_L)$, where $L$ denotes the number of layers in the model. We also adopt $\boldsymbol{x}_i$ as input to layer $i$, and $\boldsymbol{e}_i$ as backpropagated error until that layer.
The loss function can be expressed in terms of $\boldsymbol{x}_{i-1}$ and $\boldsymbol{e}_i$, which are the canonical basis of the loss: 
%
%
\begin{small}
\begin{equation}
  \label{eq:reparam}
  \mathcal{L}(\vtheta) \equiv \mathcal{L} (\mW_1, \mW_2, ... , \mW_L) \equiv \mathcal{L}(\boldsymbol{x}_{0}, ..., \boldsymbol{x}_{L-1}, \boldsymbol{e}_1, ..., \boldsymbol{e}_L)
\end{equation}
\end{small}
The gradient for each weight matrix can be rewritten on a suitable canonical basis as:
%
%
\begin{equation}
  \label{eq:batchgrad}
  \nabla_{\mW_i} \mathcal{L}(\mW_i) = \sE_{p^*(\vx,\vy)}[ \boldsymbol{e}_i \boldsymbol{x}_{i-1}^{\top} ] = \sE_{p^*(\vx,\vy)} [\boldsymbol{x}_{i-1} \otimes \boldsymbol{e}_i] ,
\end{equation} 
where $\otimes$ denotes the Kronecker product. 
Also, inspired by \citep{exactngd}, we approximate the full Fisher Information Matrix as follows: 
\begin{align}
  \label{eq:fisherij}
  \begin{split}
      \mF(\vtheta) & = \sE_{p(\vy|\vx,\vtheta)}[ \boldsymbol{x} \boldsymbol{x}^{\top} \otimes \boldsymbol{e} \boldsymbol{e}^{\top} ] \\
       & \approx \sE_{p(\vy|\vx,\vtheta)}[ \boldsymbol{x} \boldsymbol{x}^{\top} ] \otimes \sE_{p(\vy|\vx,\vtheta)}[ \boldsymbol{e} \boldsymbol{e}^{\top} ]
  \end{split}
\end{align}
Assuming each weight matrix is independent of the others, for each model layer we get: 
\begin{equation}
  \label{eq:fisherii}
  \mF_{ii}(\mW_i) = \sE_{p(\vy|\vx,\vtheta)}[ \boldsymbol{x}_{i-1} \boldsymbol{x}_{i-1}^{\top} ] \otimes \sE_{p(\vy|\vx,\vtheta)}[ \boldsymbol{e}_i \boldsymbol{e}_i^{\top} ]
\end{equation} 
where we refer to $\mLambda_{i-1} = \sE_{p(\vy|\vx,\vtheta)}[ \boldsymbol{x}_{i-1} \boldsymbol{x}_{i-1}^{\top} ]$ as the activation matrix, and $\mGamma_i = \sE_{p(\vy|\vx,\vtheta)}[ \boldsymbol{e}_i \boldsymbol{e}_i^{\top} ]$ as the error matrix.
Then, we can write the weight update for layer $\mW_i$ as: 
\begin{align}
\label{eq:weightupdate}
\begin{split}
    \Delta \mW_i & \propto \mF_{ii}(\mW_i)^{-1} \nabla_{\mW_i} \mathcal{L}(\mW_i) \\
     &\propto \mLambda_{i-1}^{-1} \cdot \sE_{p^*(\vx,\vy)}[ \boldsymbol{e}_i \boldsymbol{x}_{i-1}^{\top} ] \cdot \mGamma_i^{-1}
\end{split}
\end{align}
To compute this equation, we use the mixed Kronecker matrix-vector product property: $(\mA \otimes \mB) \text{vec}(\mC) = \text{vec}(\mB\mC\mA^{\top})$.
%
\paragraph{Lazy Fisher.}Computing the Fisher information matrix at every training step is computationally expensive. 
We take advantage of the idea of using the same Fisher information matrix for the next few training steps, which is also known as \textit{Lazy Hessian} in Hessian estimation\citep{shamanskii1967modification}.
It has been shown to achieve global convergence in damped Newton methods\citep{wang2006further} and proximal Newton-type algorithms \citep{adler2020new} in the context of convex optimization, and global convergence guarantees for its damped and regularized variants\citep{doikov-lazyhessian,chayti-doikov-2023unified}. 
We empirically investigate the effectiveness of the lazy Fisher approach in our experiments. 
This technique significantly reduces training time -- often by orders of magnitude -- compared to standard NGD, as it avoids inverting the Fisher matrix at every iteration. 

\subsection{Frequentist: Implicit and Explicit Gradient Regularized Natural Gradients}
\label{subsec:GRNG}

Although the weight update derived in \autoref{eq:weightupdate} provides a quadratic approximation of the loss function, its applicability remains challenging because the error matrix computed at a given iteration can be rank-deficient. Furthermore, as shown in \autoref{eq:lossdelta}, the loss function is approximated using a second-order Taylor expansion around the current solution. Hence, the approximation can be accurate in the neighborhood of the solution, particularly for nearly linear loss functions, which is a condition that typically does not hold in the early stages of training. Moreover, due to the non-convex nature of the optimization landscape, the weight update direction in \autoref{eq:weightupdate} might not yield an optimal descent direction. In such cases, an appropriate regularization is necessary, as the steepest descent may lead to suboptimal performance. From an empirical perspective, we also consider how to obtain the weight update for pointwise operations. Therefore, regularization methods can be a helpful tool to accelerate the training process.

\paragraph{Gradient Regularization.}
Gradient regularization adds the norm of gradient of the loss to encourage the search for flatter minima \citep{barrett2020igr}. Here, in this case, we will add a regularizer averaged over the model distribution, meaning that we regularize our objective to create a direction where the gradient averaged over the model distribution is minimized. 
Gradient regularization to the quadratic loss approximation can be defined as:
\begin{align}
  \label{eq:grad_reg_loss}
  & \mathcal{L}_G(\vtheta + \vdelta) =  \mathcal{L}(\vtheta) + \nabla_{\vtheta} \mathcal{L}(\vtheta) \vdelta \notag\\
  & + \frac{1}{2} \vdelta^{\top} \Big[ \mF(\vtheta) + \rho \norm{\sE_{p(\vy|\vx,\vtheta)}[\nabla_{\vtheta} \mathcal{L}(\vtheta)]}^2_2 \mI \Big] \vdelta + \mathO(|\vdelta|^3),
\end{align} 
where $\rho$ denotes the damping coefficient.
We consider two approaches to applying gradient regularization: \textit{explicit} (in the \textbf{RENG} algorithm) and \textit{implicit} (in the \textbf{RING} algorithm). The explicit method involves computing the \textit{L2-norm} of the primary loss gradient and then backpropagating through this norm—an approach known as \textit{double backpropagation} \citet{drucker1992improving}. In contrast, the implicit method avoids this extra step by directly estimating an update direction that reflects the regularization effect.

The double backpropagation technique has demonstrated its effectiveness in improving generalization by \citep{doublebp}. 
\citep{flatminima_sepp} has also shown that it encourages convergence toward flatter minima in the loss landscape.
 \citep{barrett2020igr} has theoretically studied this technique under vanilla SGD and proves that gradient regularization encourages taking a shallower path to reach local minima. 
For the weight update step in the RING and RENG algorithms, we adopt the following equations from \citep{exactngd}.
\begin{subequations} \label{eq:grad_weight_update}
    \begin{align}
        \Delta \mW_i = & \frac{\alpha}{L} \left( \mLambda_{i-1} + \sqrt{\rho} \norm{\mLambda_{i-1}}_2 \mI \right)^{-1} \notag\\
        & \cdot \sE_{p^*(\vx,\vy)}[ \boldsymbol{e}_i \boldsymbol{x}_{i-1}^{\top} ] \cdot \left( \mGamma_i + \sqrt{\rho} \norm{\mGamma_{i}}_2 \mI \right)^{-1}
        \label{Eq.RING} \\
        \Delta \mW_i = & \frac{\alpha}{L} \left( \mLambda_{i-1} + \sqrt{\rho}\mI \right)^{-1} \notag\\
        & \cdot \sE_{p^*(\vx,\vy)}[ \boldsymbol{e}_i \boldsymbol{x}_{i-1}^{\top} ] \cdot \left( \mGamma_i + \sqrt{\rho}\mI \right)^{-1} . 
        \label{Eq. RENG}
    \end{align}
\end{subequations}
Here, we define $\tilde{\mLambda} := \mLambda + \lambda \mI$, and $\tilde{\mGamma} := \mGamma +\lambda \mI$, where $\lambda = \sqrt{\rho} \norm{\cdot}_2$ is for RING, and $\lambda = \sqrt{\rho}$ is for RENG. These regularizations can be viewed as modifications of the geometry induced by the Fisher information matrix.
The pseudocode of these algorithms is provided in \autoref{Algorithm}, and their computational complexity is analyzed in Appendix~\autoref{Frequentist Computational Complexity}.

\paragraph{Updating Inverse Matrix.}
Despite the computational advantages that Lazy Fisher offers, we observe that it can occasionally lead to training instabilities. 
The primary reason is the use of the regularization mechanism. Specifically, when the damping coefficient ($\rho$) is updated at each iteration, it must remain greater than the minimum eigenvalue of the Fisher matrix at that iteration. Failure to satisfy this condition may result in inaccurate updates along the eigenvectors with the smallest eigenvalues. To mitigate this issue, we approximate the updated matrix inversion using matrix differential techniques.
    To this aim, the regularized activation ($\tilde{\mLambda}$) and error ($\tilde{\mGamma}$) matrices are updated as follows:
    \begin{subequations}
    \label{eq:inverse_update}
        \begin{align}
         \tilde{\mLambda}_{i_{k+1}}^{-1} &= \tilde{\mLambda}_{i_{k}}^{-1} - d\lambda_k \tilde{\mLambda}_{i_{k}}^{-1} \tilde{\mLambda}_{i_{k}}^{-1} + \mathcal{O}(d\lambda_k^2) \\
        \tilde{\mGamma}_{i_{k+1}}^{-1} &= \tilde{\mGamma}_{i_{k}}^{-1} - d\lambda_k \tilde{\mGamma}_{i_{k}}^{-1} \tilde{\mGamma}_{i_{k}}^{-1} + \mathcal{O}(d\lambda_k^2)
        \end{align}
    \end{subequations}
    See Appendix \autoref{Proof:pro:inverse_update} for the detailed derivation.
%
\paragraph{Newton’s Iteration.}
\autoref{eq:inverse_update} requires the initial inverses $\tilde{\mLambda}_i^{-1}$ and $\tilde{\mGamma}_i^{-1}$ to be precomputed, which can itself be computationally expensive. To address this, we employ \textit{Newton’s Iteration} method \citep{schulz1933iterative} using a cubic-order Neumann series approximation. Newton's iteration start from a scaled version of the transpose of the matrix ($\mX_0 = \alpha \mA^{\top}$) and iteratively reduce the residue using a Neumann series approximation: $\mX_{k+1} = \mX_k (3 \mI - 3\mA \mX_k + (\mA \mX_k)^2)$. This series will converge to $\mA^{-1}$ provided proper choice of $\alpha$ like $\alpha = \frac{1}{Tr(\mA^{\top} \mA)}$ or $\alpha = \frac{1}{\norm{\mA}_2 \norm{\mA}_{\infty}}$~\citep{ben1965iterative, ben1966iterative}.

\subsection{Bayesian: Gradient Regularized Kalman}

Leveraging a Kalman-based framework, we recursively update the posterior distribution over model parameters along the natural gradient direction, which avoids the explicit inversion of the Fisher Information Matrix within a Bayesian setting.


\paragraph{Kalman Algorithm.}
Following the standard Kalman filtering assumptions, we adopt a Gaussian variational approximation for the posterior distribution over model parameters: $p(\vtheta_{k} \mid \mathD_{1:k}) = \mathN(\vmu_k,\mSigma_k)$. 
In this regard, we assume that an initial prior distribution of the trainable parameters is given by a Gaussian distribution $p(\vtheta_0) = \mathN(\vmu_0,\mSigma_0)$. 
\autoref{Eq. posterior} can be recursively computed for each data point in $\mathD$ using Kalman filtering, which corresponds to following the natural gradient direction (see Appendix \autoref{Proof:Kalman_NGD} for details). 
Consider a Gaussian likelihood function as $p(\vy_k \mid \vx_k , \vtheta_{k-1}) = \mathN(\vy_k \mid \hat{\vy}_k, \mR_k)$; or more broadly, an exponential family non-Gaussian likelihood function expressed as $p(\vy_k \mid \vx_k , \vtheta_{k-1}) = f_{\text{exp}} (T(\vy_k)|\hat{\vy}_k)$, where $f_{\text{exp}}$ denotes the exponential family function, and $T(\cdot)$ represents the sufficient statistics associated with the exponential family.
Here, $\vy_k$ denotes the true output, while $\hat{\vy}_k$ corresponds to the model's predicted output. 
In this context, the observation noise covariance matrix $\mR_k$ is defined as $\mR_k = \text{Cov}(\vy_k|\hat{\vy}_k)$ (for non-Gaussian: $\mR_k = \text{Cov}(T(\vy_k)|\hat{\vy}_k)$), which captures the covariance between $\vy_k$ and $\hat{\vy}_k$ (for non-Gaussian: $T(\vy_k)$ and $\hat{\vy}_k$) \citep{ollivier2018online}.

Based on the information form of the Kalman algorithm (information filtering), two fundamental steps should be taken to estimate the posterior distribution of the trainable parameters \cite{simon2006optimal}:

\emph{Prediction Step:} Under the first step of the Kalman algorithm, the prior is predicted based on the posterior from the previous iteration, following the Gaussian transition model:
\begin{equation} \label{Eq. prior}
    p(\vtheta_{k \mid k-1} \mid \vtheta_{k-1}) = \mathN(\vtheta_{k \mid k-1} \mid \vtheta_{k-1}, \mQ_{k}) ,
\end{equation}
where $\mQ_{k}$ denotes the process noise covariance matrix, and the subscript $k | k{-}1$ indicates the predicted value at time step $k$ conditioned on information available up to time $k{-}1$.
Under this evolution, the predicted prior distribution is:
%
%
\begin{subequations} \label{Eq. LoKA Prediction}
        \begin{align} 
            \vmu_{k|k-1} &= \vmu_{k-1}
            \label{Eq. LoKO Prediction State}\\
            \mSigma^{-1}_{k|k-1} &= \mSigma^{-1}_{k-1} + \mQ_{k}
            \label{Eq. LoKA Prediction Covariance}
        \end{align}
\end{subequations}
\emph{Updating Step:} In the second step of the Kalman filter, the predicted prior is updated to estimate the posterior as follows:
\begin{subequations} \label{Eq. Algorithm Updating}
    \begin{align} 
        \mK_k &= \mSigma_{k|k-1} \mH_k^{\top} \left(\mH_k \mSigma_{k|k-1} \mH_k^{\top} + \mR_k\right)^{-1}
        \label{Eq. Algorithm diag gain update}\\
        \vmu_{k} &= \vmu_{k|k-1} + \mK_k \left(\vy_k - h(\vx_{k}, \vmu_{k|k-1}) \right)
        \label{Eq. Algorithm Updating Process}\\
        \mSigma^{-1}_{k} &= \mSigma^{-1}_{k|k-1} - \mH^{\top}_k \mR^{-1}_k \mH_k
        \label{Eq. Algorithm diag cov update}
    \end{align}
\end{subequations}
where $\mK_k$ is the Kalman gain, and $\mH_k$ indicates the Jacobian matrix of the model $h(\vx, \vtheta)$ with respect to the parameters $\vtheta$ at the point of $(\vx_{k}, \vmu_{k|k-1})$. 
Note that $h(\vx_{k}, \vmu_{k|k-1})$ represents the predicted values for the regression tasks and the predicted probabilities for the classification tasks.

\paragraph{Regularized Kalman.}
In line with the discussions presented in \autoref{subsec:GRNG} regarding the frequentist perspective, the gradient (or equivalently, Jacobian) regularization can be interpreted as a modification of the Fisher information matrix of the form $\tilde{\mF}_k = \mF_k + \rho \mH_k^{\top} \mH_k$. This interpretation reveals that the gradient-regularized natural gradient step can be expressed as a modified Kalman update, thereby establishing a direct connection between regularized Kalman filtering and RING/RENG algorithms.

Recall that the Fisher information matrix associated with a negative log-likelihood loss $\mathcal{L}$ is given by (see \autoref{eq:fisher}): $\mF(\vtheta) = \sE_{p(\vy|\vx,\vtheta)}[ \nabla_{\vtheta} \mathcal{L} \cdot \nabla_{\vtheta} \mathcal{L}^{\top} ]$.
At time step $k$, the contribution of a single observation can be expressed as $\mF(\vtheta_k) = \nabla_{\vtheta} \mathcal{L}_k \cdot \nabla_{\vtheta} \mathcal{L}_k^{\top}$. Applying the chain rule, $\nabla_{\vtheta}\mathcal{L}_k = \nabla_{\hat{\vy}}\mathcal{L}_k \cdot \nabla_{\vtheta}\hat{\vy}_k$, we obtain $\mF(\vtheta_k) = (\nabla_{\hat{\vy}}\mathcal{L}_k \cdot \nabla_{\vtheta}\hat{\vy}_k) \cdot (\nabla_{\hat{\vy}}\mathcal{L}_k \cdot \nabla_{\vtheta}\hat{\vy}_k)^{\top}$. By matrix multiplication rules, this reduces to $\mF(\vtheta_k) = \nabla_{\vtheta}^{\top}\hat{\vy}_k \cdot \nabla_{\hat{\vy}}^2 \mathcal{L}_k \cdot \nabla_{\vtheta}\hat{\vy}_k$.
Identifying $\mH_k = \nabla_{\vtheta}\hat{\vy}_k$ and noting that for Gaussian (or more generally exponential-family) likelihoods, $\mR_k^{-1} = \nabla_{\hat{\vy}}^2 \mathcal{L}_k$, we conclude that $\mF(\vtheta_k) = \mH_k^{\top} \mR_k^{-1} \mH_k$. 
Substituting into the regularized Fisher information, we obtain $\tilde{\mF}_k = \mH_k^{\top} \mR_k^{-1} \mH_k + \rho \mH_k^{\top} \mH_k$, or equivalently $\tilde{\mF}_k = \mH_k^{\top} (\mR_k^{-1} + \rho \mI) \mH_k$.
This motivates the definition of a modified observation noise covariance, $ \tilde{\mR}_k = (\mR_k^{-1} + \rho \mI)^{-1}$, which can be interpreted as a regularized version of $\mR_k$. To avoid repeated inversion, we factor out $\mR_k^{-1}$, yielding $\tilde{\mR}_k = \mR_k (\mI + \rho \mR_k)^{-1}$. 
Substituting $\tilde{\mR}_k$ into the Kalman gain expression (see \autoref{Eq. Algorithm diag gain update}), the regularized Kalman (R-Kalman) gain becomes:
%
\begin{equation}
\tilde{\mK}_k = \mSigma_{k|k-1} \mH_k^{\top} \left(\mH_k \mSigma_{k|k-1} \mH_k^{\top} + \mR_k (\mI + \rho \mR_k)^{-1} \right)^{-1} .
\end{equation}

Here, $\tilde{\mK}_k$ represents the regularized Kalman gain.
For small $\rho \ll 1$, computational efficiency can be improved by approximating the inverse via a first-order Neumann series: $(\mI + \rho \mR_k)^{-1} = \mI - \rho \mR_k + \mathO(\rho^2)$.
The pseudocode of the proposed R-Kalman algorithm is provided in \autoref{Algorithm}, and its computational complexity is analyzed in Appendix~\autoref{Bayesian Computational Complexity}.

\begin{figure*}[t]
    \includegraphics[width=\textwidth]{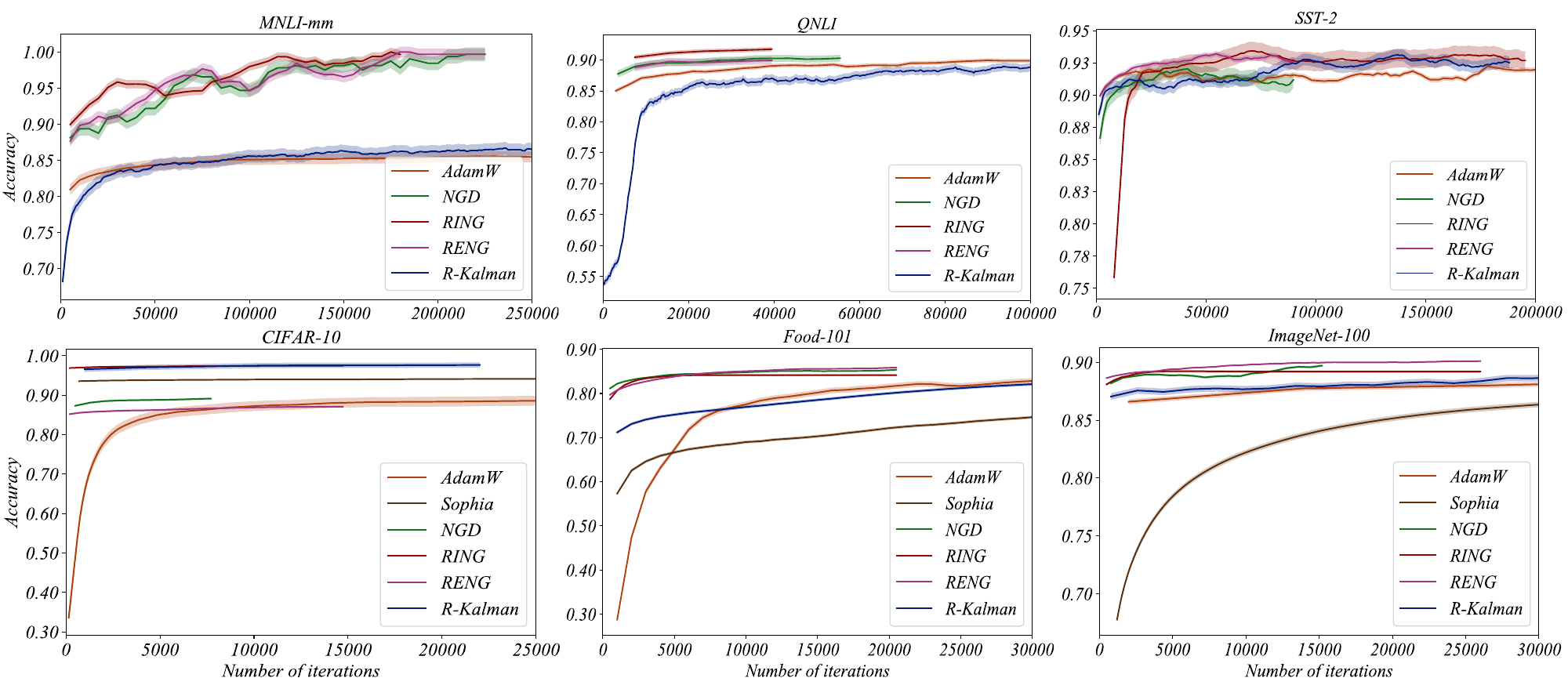}
    \caption{Validation accuracy of our proposed algorithms (RING, RENG, and R-Kalman) compared to AdamW, Sophia, and NGD on selected language (top row) and vision (bottom row) datasets. The plots illustrate validation accuracy as a function of training iterations. For a comprehensive set of results, please refer to \autoref{sec:Additional Experiments}.}
    \label{fig:main_results}
\end{figure*}

\section{Convergence of Gradient Regularized Natural Gradients}
\label{sec:GR_convergence}

In this section, we provide a convergence analysis of GRNG for full-batch training of a two-layer neural network in the over-parameterized regime.
We start with the following standard assumptions \cite{zhang2019fast}:

\begin{assumption}
\label{assumption1}
    \textbf{Full row rank of the Jacobian matrix}:
    The Jacobian matrix $\mH_0$ at the initialization has full row rank, or equivalently, the Gram matrix $\mG_0 = \mH_0 \mH_0^\top$ is positive definite.
\end{assumption}

\begin{assumption}
\label{assumption2}
    \textbf{Stable Jacobian}:
    There exists $0 \leq C < \frac{1}{2}$ such that for all parameters $\vtheta$ that satisfy $\|\vtheta - \vtheta_0 \|_2 \leq \frac{3\|\vy - \hat{\vy}_0\|_2}{\sqrt{\lambda_{\mathrm{min}}(\mG_0)}}$, we have
    \begin{equation}
        \|\mH(\vtheta) - \mH_0\|_2 \leq \frac{C}{3} \sqrt{\lambda_{\mathrm{min}}(\mG_0)} .
    \end{equation}
\end{assumption}


\begin{theorem}
\label{onvergence th.}
    \textbf{Gradient Regularized Natural Gradients}:
    Let Assumption \ref{assumption1} and Assumption \ref{assumption2} hold. Suppose we optimize a full-batch training with the ground truth of $\vy$ using GRNG with a learning rate $\eta$. Then for the training iterations $k = 0, 1, 2, \dots$, the squared error is bounded as:
$$
    \| \vy - \hat{\vy}_{k+1} \|^2_2 \leq  (1 - \eta) \| \vy - \hat{\vy}_k \|^2_2 .
$$
Provided that the learning rate satisfies $\eta \leq \frac{2(1 + \alpha_k)(1 + C) - 1}{(1 + \alpha_k)^2(1 + C)^2}$, where $\alpha_k = 1 + \rho \kappa (\mG) \| \vy - \hat{\vy}_{k} \|^2_2$; with $\kappa(\mG)$ as condition number, i.e., $\lambda_{max}(\mG)/\lambda_{min}(\mG)$, and $\lambda(\cdot)$ as the eigenvalue operator.
\end{theorem}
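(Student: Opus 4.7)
The plan is to track the residual $\vr_k := \hat{\vy}_k - \vy$ along the GRNG trajectory and to show the stated bound emerges from splitting the residual recursion into a \emph{regularization} piece and a \emph{linearization} piece, mirroring Zhang et al.'s NGD analysis but carrying an extra term coming from the $\rho\|\nabla_{\vtheta}\mathcal{L}\|_2^2 \mI$ correction in \autoref{eq:grad_reg_loss}. For a two-layer network with the squared/Gaussian loss, the Fisher coincides with the Gauss--Newton matrix $\mF_k = \mH_k^{\top}\mH_k$ and $\nabla_{\vtheta}\mathcal{L}(\vtheta_k) = \mH_k^{\top}\vr_k$, so reading the update off \autoref{eq:grad_reg_loss} gives
\begin{equation*}
\Delta\vtheta_k = -\bigl(\mH_k^{\top}\mH_k + \rho_k \mI\bigr)^{-1}\mH_k^{\top}\vr_k, \qquad \rho_k := \rho\,\|\mH_k^{\top}\vr_k\|_2^{2}.
\end{equation*}

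Next I would introduce the integrated Jacobian $\tilde{\mH}_k := \int_0^1 \mH(\vtheta_k + t\Delta\vtheta_k)\,dt$ so that $\hat{\vy}_{k+1}-\hat{\vy}_k = \tilde{\mH}_k\Delta\vtheta_k$ exactly. Adding and subtracting $\mH_k\Delta\vtheta_k$ and applying the push-through identity $\mH(\mH^{\top}\mH+\rho\mI)^{-1} = (\mH\mH^{\top}+\rho\mI)^{-1}\mH$ produces the clean decomposition
\begin{equation*}
\vr_{k+1} = \rho_k (\mG_k + \rho_k \mI)^{-1}\vr_k \;-\; (\tilde{\mH}_k - \mH_k)\,\mH_k^{\top}(\mG_k + \rho_k\mI)^{-1}\vr_k.
\end{equation*}
For the first term, $\|\rho_k(\mG_k+\rho_k\mI)^{-1}\vr_k\|_2 \le \rho_k\|\vr_k\|_2/\lambda_{\min}(\mG_k)$, and since $\rho_k \le \rho\,\lambda_{\max}(\mG_k)\|\vr_k\|_2^{2}$, this contributes exactly the $\rho\,\kappa(\mG)\|\vr_k\|_2^{3}$ piece that appears in $M_k\|\vr_k\|_2$. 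For the second term, the spectral estimate $\|\mH_k^{\top}(\mG_k+\rho_k\mI)^{-1}\vr_k\|_2 \le \|\vr_k\|_2/\sqrt{\lambda_{\min}(\mG_k)}$ reduces the bound to $\|\tilde{\mH}_k - \mH_k\|_2 / \sqrt{\lambda_{\min}(\mG_k)}$. Assumption \ref{assumption2} gives $\|\tilde{\mH}_k-\mH_0\|_2 \le \tfrac{C}{3}\sqrt{\lambda_{\min}(\mG_0)}$ (and similarly for $\mH_k$), while a Weyl-type perturbation argument shows $\lambda_{\min}(\mG_k)$ stays comparable to $\lambda_{\min}(\mG_0)$; combining these yields a multiplicative factor on the $\|\vr_k\|_2$ term that is a function of $C$ only. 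Consolidating the two contributions and collecting constants produces the $(1+C)M_k$ prefactor in the stated inequality.

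The main obstacle, as in the pure NGD case, is closing the induction that keeps $\vtheta_k$ inside the ball $\{\|\vtheta - \vtheta_0\|_2 \le 3\|\vr_0\|_2/\sqrt{\lambda_{\min}(\mG_0)}\}$ on which Assumption \ref{assumption2} is valid, because the stable-Jacobian bound I need in order to telescope the residuals is itself only available inside that ball. I would handle this by bounding the step size via $\|\Delta\vtheta_k\|_2 \le \|\mH_k^{\top}(\mG_k+\rho_k\mI)^{-1}\vr_k\|_2 \le \|\vr_k\|_2/\sqrt{\lambda_{\min}(\mG_k)}$ and summing geometrically with the per-step residual bound established above. The new difficulty relative to the unregularized analysis is the cubic $\rho\,\kappa(\mG)\|\vr_k\|_2^{3}$ term, whose effective contraction rate depends on the current residual; I would therefore impose that $\rho$ and the initial residual are small enough that $\rho\,\kappa(\mG_0)\|\vr_0\|_2^{2}$ lies below a threshold depending only on $C$ (ensuring $(1+C)M_0 < 1$), after which the geometric decay produced by the proved inequality keeps every subsequent $M_k$ under control and closes the induction.
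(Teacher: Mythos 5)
Your proposal is correct and would establish the stated bound, but it takes a genuinely different (and in fact tighter) route than the paper's proof. The paper likewise writes $\hat{\vy}_{k+1}-\hat{\vy}_k$ via the integrated Jacobian along the step and uses the push-through identity, but it then replaces the regularized resolvent by a first-order Neumann expansion, $(\mG_k+\tilde{\rho}_k\mI)^{-1}\approx \mG_k^{-1}-\tilde{\rho}_k\mG_k^{-2}$, splits the result into four terms (the two terms of the Zhang-et-al.\ NGD analysis plus two $\tilde{\rho}_k$-correction terms), and then bounds $\|\hat{\vy}_{k+1}-\vy\|_2\le\|\hat{\vy}_k-\vy\|_2+\sum_j\|\textcircled{j}\|_2$ by the plain triangle inequality with no cancellation; that is exactly what produces the prefactor $(1+C)M_k=(2+C)+(1+C)\rho\kappa(\mG)\|\hat{\vy}_k-\vy\|_2^2\ge 2$. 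Your identity $\vr_{k+1}=\rho_k(\mG_k+\rho_k\mI)^{-1}\vr_k-(\tilde{\mH}_k-\mH_k)\mH_k^{\top}(\mG_k+\rho_k\mI)^{-1}\vr_k$ is exact (no Neumann step) and exploits the cancellation of the leading Gauss--Newton term against the current residual, yielding a per-step factor of roughly $\tfrac{2C}{3-C}+\rho\,\kappa(\mG_k)\|\vr_k\|_2^2$; since the paper's $\kappa(\mG)$ is, when one traces its bound on the correction term, the same $\kappa(\mG_k)$ you use, your estimate is strictly stronger than the claimed $(1+C)M_k$ and the theorem follows a fortiori. What the paper's route buys is that it literally reproduces the stated constants; what yours buys is an exact decomposition, an actually contractive per-step estimate (for small $\rho$ and residual), and an explicit treatment of keeping the iterates inside the ball on which Assumption~\ref{assumption2} holds --- a point the paper leaves implicit when it invokes the $\tfrac{2C}{3}\sqrt{\lambda_{\min}(\mG_0)}$ Jacobian-drift bound. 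One remark: the smallness conditions on $\rho$ and on $\rho\,\kappa(\mG_0)\|\vr_0\|_2^2$ that you impose to close your induction are not needed for the theorem as stated, which is only a conditional per-step bound; they become relevant if one wants Corollary~\ref{corr onvergence th.} to deliver genuine convergence rather than a formal product bound.
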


The detailed proof is provided in \autoref{convergence}. From this per-step reduction in output space, we can get the following corollary, which shows that the step size only depends on a constant $C$ and the maximum output error at any step $k$.


\begin{corollary}
\label{corr onvergence th.}
    From the above theorem, we can state that if the learning rate satisfies $\eta \leq \hat{M}$, where $\hat{M} = \min_{i} \left( \frac{2(1 + \alpha_i)(1 + C) - 1}{(1 + \alpha_i)^2(1 + C)^2} \right)$, then the $k^{th}$ step error in the output space exhibits geometric decay from initialization as:
$$
    \| \vy - \hat{\vy}_{k} \|^2_2 \leq (1 - \eta)^k \| \vy - \hat{\vy}_0 \|^2_2 .
$$
\end{corollary}



\begin{table*}
  \caption{Test accuracy (mean $\pm$ standard deviation) for fine-tuning ViT-B16 on CIFAR-10/100, Oxford-IIIT Pet, Food-101, and ImageNet-100 using various optimization algorithms. Higher values indicate better performance. }
  \label{Table: Result Vision}
  \centering
  \begin{tabular}{lccccc}
    \toprule
    \textbf{Algorithm} & \textbf{CIFAR-10} & \textbf{CIFAR-100}  &  \textbf{Oxford-IIIT Pet} & \textbf{Food-101} & \textbf{ImageNet-100} \\
    \midrule
    \textbf{AdamW} & $88.9_{\pm0.10}$  & $87.8_{\pm0.02}$ & $91.5_{\pm0.08}$ & $85.2_{\pm0.02}$ & $89.0_{\pm0.04}$  \\ \addlinespace[1ex]
    \textbf{Sophia}      & $97.1_{\pm0.05}$ & $86.8_{\pm0.04}$ & $88.8_{\pm0.01}$ & $80.0_{\pm0.01}$ & $88.2_{\pm0.03}$ \\ \addlinespace[1ex]
    \textbf{NGD}      & $89.3_{\pm 0.05}$ & $\boldsymbol{89.3_{\pm0.02}}$ & $\boldsymbol{93.1_{\pm0.02}}$ & $85.4_{\pm0.01}$ & $89.6_{\pm0.02}$  \\ \addlinespace[1ex]
    \textbf{RING} & $91.5_{\pm0.02}$  & $88.2_{\pm0.02}$ & $92.6_{\pm0.01}$ & $85.0_{\pm0.01}$ & $89.0_{\pm0.01}$  \\ 
    \addlinespace[1ex]
    \textbf{RENG} & $89.2_{\pm0.10}$  & $88.2_{\pm0.05}$ & $91.2_{\pm0.01}$ & $85.8_{\pm0.01}$ & $\boldsymbol{90.10_{\pm0.01}}$  \\ \addlinespace[1ex]
    \textbf{R-Kalman} & $\boldsymbol{97.1_{\pm0.04}}$  & $88.06_{\pm0.02}$ & $92.8_{\pm0.02}$ & $\boldsymbol{86.2_{\pm0.01}}$ & $89.8_{\pm0.04}$  \\ 
    \bottomrule
  \end{tabular}
\end{table*}

\begin{table*}
  \small
  \caption{Test accuracy or Pearson correlation (mean $\pm$ standard deviation) for fine-tuning RoBERTa-base on the GLUE benchmark, including MNLI-mm , QQP, QNLI, SST-2, CoLA, STS-B, MRPC, and RTE using various optimization algorithms. Higher values indicate better performance. }
  \label{Table: Result Language}
  \centering
  {\small
  \begin{tabular}{lcccccccc}
    \toprule
    \textbf{Algorithm} & \textbf{MNLI-mm} & \textbf{QQP}  &  \textbf{QNLI} & \textbf{SST-2} & \textbf{CoLA} & \textbf{STS-B} & \textbf{MRPC} &  \textbf{RTE} \\
    \midrule
    \textbf{AdamW} & $85.5_{\pm0.2}$  & $87.2_{\pm0.3}$ & $90.0_{\pm0.2}$ & $90.4_{\pm0.3}$ & $62.0_{\pm0.3}$ & $88.8_{\pm0.3}$ & $87.8_{\pm0.1}$ & $71.8_{\pm0.2}$  \\ \addlinespace[1ex]
    \textbf{NGD}      & $97.2_{\pm0.5}$ & $88.6_{\pm0.3}$ & $90.6_{\pm0.3}$ & $91.2_{\pm0.5}$ & $56.2_{\pm0.2}$ & $81.1_{\pm0.2}$ & $86.5_{\pm0.3}$ & $71.0_{\pm0.1}$  \\ \addlinespace[1ex]
    \textbf{RING} & $\boldsymbol{98.6_{\pm0.1}}$  & $88.8_{\pm0.1}$ & $\boldsymbol{91.5_{\pm0.1}}$ & $92.4_{\pm0.1}$ & $58.5_{\pm0.0}$ & $88.2_{\pm0.3}$ & $\boldsymbol{88.2_{\pm0.2}}$ & $\boldsymbol{76.5_{\pm0.2}}$ \\ \addlinespace[1ex]
    \textbf{RENG} & $97.6_{\pm0.2}$  & $\boldsymbol{90.2_{\pm0.1}}$ & $90.5_{\pm0.2}$ & $\boldsymbol{92.6_{\pm0.1}}$ & $\boldsymbol{62.2_{\pm0.0}}$ & $89.2_{\pm0.3}$ & $88.1_{\pm0.1}$ & $75.2_{\pm0.1}$ \\ \addlinespace[1ex]
    \textbf{R-Kalman} & $87.5_{\pm0.1}$  & $86.1_{\pm0.2}$ & $90.8_{\pm0.2}$ & $90.9_{\pm0.2}$ & $59.5_{\pm0.0}$ & $\boldsymbol{89.8_{\pm0.2}}$ & $\boldsymbol{88.2_{\pm0.2}}$ & $74.6_{\pm0.2}$  \\ 
    \bottomrule
  \end{tabular}
  }
\end{table*}

\section{Experimental Evaluation}
\label{sec: Experiments and Discussion}
We evaluate the performance of our proposed algorithms through extensive experiments on well-established computer vision and language datasets, using different models in transfer learning.

\paragraph{Baseline Optimizers.}
We consider AdamW \citep{loshchilov2017decoupled}, Sophia \citep{liu2023sophia}, and NGD \citep{amari1998natural} as baseline algorithms. AdamW is a widely adopted first-order optimization method, serving as a strong practical baseline. NGD represents a classical second-order optimization approach, which we implement using the K-FAC method with Tikhonov regularization. Sophia, in contrast, is a recently proposed second-order optimizer that leverages a diagonal approximation of the Fisher information matrix.
For parameter-efficient fine-tuning, we adopt LoRA \citep{hu2022lora}, following the framework in \citep{houlsby2019parameter, han2024parameter}. LoRA has emerged as a widely used technique due to its ability to enable efficient model adaptation while maintaining low computational and memory overhead \citep{han2024parameter}.


\paragraph{Evaluation Metrics.}
To assess the effectiveness of the proposed algorithms, we adopt standard evaluation metrics corresponding to each benchmark. For image classification tasks, we report Top-1 accuracy as the primary performance measure. For the GLUE benchmark, we follow the established evaluation protocol, using accuracy for most tasks, except for CoLA, where we report Matthews correlation coefficient to capture similarity prediction quality. In addition to predictive performance, we evaluate the computational efficiency of the algorithms by measuring their total running time and comparing it against the corresponding baselines.

\begin{table*}
  \small
  \caption{Time Analysis: Total running-time (and per-iteration time), both in seconds, for fine-tuning of MNLI-mm, QQP, QNLI, CIFAR-100, Food-101, and ImageNet-100 on A100 GPU. Lower values indicate faster convergence.}
  \label{Table: Total time}
  \centering
  {\small
  \begin{tabular}{lcccccc}
    \toprule
    \textbf{Algorithm} & \textbf{MNLI-mm} & \textbf{QQP}  &  \textbf{QNLI} & \textbf{CIFAR-100} & \textbf{Food-101} & \textbf{ImageNet-100} \\
    \midrule
    \textbf{AdamW} & $10308_{(0.21)}$  & $10460_{(0.12)}$ & $\boldsymbol{3011}_{(0.23)}$ & $10156_{(0.65)}$ & $34466_{(0.91)}$ & $51857_{(1.00)}$  \\ \addlinespace[1ex]
    \textbf{NGD}      & $13008_{(2.36)}$ & $12052_{(2.05)}$ & $6939_{(2.12)}$ & $2780_{(2.78)}$ & $5817_{(3.84)}$ & $10600_{(5.11)}$  \\ \addlinespace[1ex]
    \textbf{RING} & $\boldsymbol{8835}_{(1.44)}$  & $8357_{(1.39)}$ & $4811_{(1.47)}$ & $945_{(1.89)}$ & $\boldsymbol{4726}_{(3.12)}$ & $\boldsymbol{8483}_{(4.09)}$ \\ \addlinespace[1ex]
    \textbf{RENG} & $9081_{(1.48)}$  & $8641_{(1.45)}$ & $4975_{(1.52)}$ & $1990_{(1.99)}$ & $4938_{(3.26)}$ & $9686_{(4.67)}$ \\ \addlinespace[1ex]
    \textbf{R-Kalman} & $8994_{(0.26)}$  & $\boldsymbol{8279}_{(0.28)}$ & $3634_{(0.32)}$ & $\boldsymbol{817}_{(1.34)}$ & $4817_{(1.45)}$ & $10150_{(1.28)}$  \\ 
    \bottomrule
  \end{tabular}
  }
\end{table*}


\subsection{Experimental Setup}
\label{sec: Experiments Setup}

\paragraph{Datasets.}
We employ the CIFAR-10/100 \citep{krizhevsky2009learning}, Oxford-IIIT Pet \citep{parkhi2012cats}, Food-101 \citep{bossard2014food}, and ImageNet \citep{deng2009imagenet} datasets for image classification tasks due to their wide adoption as standard benchmarks that vary in complexity, scale, and domain diversity. For text classification, we evaluate performance on the GLUE benchmark \citep{wang-etal-2018-glue}, specifically MNLI-mm, QQP, QNLI, SST-2, CoLA, STS-B, MRPC, RTE. These datasets cover a broad spectrum of linguistic phenomena and task types, including natural language inference, paraphrase detection, and sentiment analysis.


\paragraph{Pre-trained Models.}
For image-based transfer learning, we utilize the ViT-B16 model \citep{dosovitskiy2020image}, pretrained with DINOv2 on ImageNet \citep{oquab2023dinov2}. For language tasks, we adopt the RoBERTa-base model \citep{liu2019roberta}, which is pretrained on large-scale text corpora.
ViT-B16 is a widely adopted Vision Transformer \citep{alexey2020image} architecture that serves as a standard benchmark in computer vision. Similarly, RoBERTa-base is among the most commonly used pretrained language models in natural language processing.

\paragraph{Training Settings.}
For tasks using AdamW and Sophia, we perform a hyperparameter sweep over learning rates in the range $(10^{-5}, 10^{-3})$, training epochs in $(5, 40)$, and batch sizes in $(8, 64)$. In the case of Adam, a learning rate schedule with a constant warmup phase (100 to 500 iterations) improves convergence and is therefore employed. For NGD, we observe that a constant learning rate close to $1.0$ is often effective. Consequently, we select among the values $\{0.9, 0.99, 0.9999, 1.0, 1.1\}$. We use a Levenberg-Marquardt-type \citep{martens2015optimizing} damping scheme to update the damping coefficient dynamically based on changes in the loss. Tuning the dampening coefficient is challenging due to its sensitivity to task and initialization. However, we have seen across all experiments that $\rho = 10^{-6}$ for Tikhonov regularized NGD and $\rho = 10^{-4}$ for RING and RENG perform well, so we sweep across $(10^{-6}, 10^{-2})$. For the Kalman implementation, we adopt the approach proposed by \citep{chang2022diagonal} to reduce the computational complexity from quadratic to linear in the number of trainable parameters ($n$). 
LoRA hyperparameters are selected based on empirical effectiveness \cite{hu2021lora}. Specifically, for all algorithms, we set the rank $r = 4$ and scaling factor $\alpha = 4$ for the vision datasets and the rank $r = 8$ and scaling factor $\alpha = 8$ for the language datasets. A sensitivity analysis of the key hyperparameters has been conducted for the proposed algorithms. For further details, please refer to \autoref{sec:Ablation Study}.
Full hyperparameter details are also provided in \autoref{Table: Hyperparameters Vision} and \autoref{Table: Hyperparameters Language} in \autoref{Hyperparameters}.
All experiments are conducted on A100 GPU platform, and the average and standard deviation over 5 runs with different initial random seeds are reported.


\subsection{Main Results and Discussions}
\label{sec:Results and Discussion}

In this section, we present the experimental results on the MNLI-mm, QNLI, and SST-2 language datasets, as well as the CIFAR-100, Food-101, and ImageNet-100 vision datasets. Results for the remaining datasets are provided in \autoref{sec:Additional Experiments}.
We employ two evaluation metrics in this study. \autoref{fig:main_results} in this section, along with \autoref{fig:nlp_results} and \autoref{fig:image_results} in \autoref{sec:Additional Experiments}, illustrate the validation accuracy curves during training as a function of the number of iterations.

In addition, \autoref{Table: Result Vision} and \autoref{Table: Result Language} report the final test accuracies, and \autoref{Table: Total time} provides time analysis of our algorithms compared to the baselines.
In general, the results demonstrate that our proposed gradient-regularized natural gradient optimizers achieve performance that is comparable or superior to established baselines in fine-tuning tasks.


\paragraph{Image Classification Experiments.}
We report the vision experimental results in \autoref{Table: Result Vision}, and \autoref{fig:image_results} of \autoref{sec:Additional Experiments}. As shown, R-Kalman achieves the best test accuracy on CIFAR-10 $(97.1\pm0.04)$, and Food-101 $(86.2\pm0.01)$. RENG demonstrates the best performance on ImageNet-100 $(90.10\pm0.01)$. Whereas, NGD shows better performance on CIFAR-100 $(89.3\pm0.02)$ and Oxford-IIIT Pet $(93.1\pm0.02)$.


\paragraph{GLUE Benchmark Experiments.}
We report the GLUE benchmark results in \autoref{Table: Result Language}, and \autoref{fig:nlp_results} of \autoref{sec:Additional Experiments}.
As indicated, RING achieves the highest test accuracy on multiple datasets, including MNLI-mm $(98.6\pm0.1)$, QNLI $(91.5\pm0.1)$, MRPC $(88.2\pm0.2)$, and RTE $(76.5\pm0.2)$.
RENG also outperforms on several tasks, and shows the best performance on QQP $(90.2\pm0.1)$, SST-2 $(92.6\pm0.1)$, and CoLA $(62.2\pm0.0)$. And, R-Kalman demonstrates higher performance in SST-B $(89.8\pm0.2)$ and MRPC $(88.2\pm0.2)$, which matches RING. 


\paragraph{Overall.}
We observe that full Fisher-based algorithms, such as RING, RENG, and NGD, tend to perform better on high-volume datasets, including ImageNet, CIFAR-100, MNLI-mm, QQP, QNLI, and SST-2. This can be attributed to the fact that accurately approximating the full Fisher information matrix typically requires $m \approx \mathcal{O}(n log(n))$ samples in a batch, where $n$ denotes the number of trainable parameters \citep{vershynin2011introductionnonasymptoticanalysisrandom}.
Moreover, R-Kalman, as a Bayesian approach, outperforms its frequentist, gradient-based counterparts when applied to datasets with limited training data. This advantage arises because gradient-based optimizers—especially those that lack effective regularization—are prone to overfitting and often produce overconfident predictions. In contrast, Bayesian algorithms inherently model uncertainty, which leads to improved generalization and a reduced risk of overfitting, as also discussed in \citep{wilson2020bayesian}.


\paragraph{Time Analysis.}
The time analysis presented in \autoref{Table: Total time} demonstrates that the proposed algorithms—RING, RENG, and R-Kalman—yield significantly faster total running times compared to the AdamW and NGD baselines across most datasets. RING achieves the fastest convergence on MNLI-mm, Food-101, and ImageNet-100, while R-Kalman is superior on QQP and CIFAR-100. Notably, despite AdamW generally exhibiting the lowest per-iteration overhead, the superior performance of our algorithms is attributed to their faster convergence rates (lower total running-time), and this confirms their effectiveness in substantially accelerating the fine-tuning process across diverse language and computer vision datasets. For a detailed analysis of the computational complexity, see \autoref{Computational Complexity}.

\section{Conclusion}
\label{Conclusion}


In this study, we proposed gradient-regularized natural gradient optimizers that extend the classical natural gradient descent framework by incorporating gradient regularization, and can be formulated within both Frequentist and Bayesian paradigms. Our algorithms achieve efficient parameter updates without the computational overhead of directly inverting the Fisher information matrix. By leveraging Newton’s iteration in the Frequentist setting and a Kalman-based approach in the Bayesian setting, our methods closely approximate the gradient-regularized natural gradient directions while maintaining computational feasibility. Extensive benchmarking on diverse image and language datasets demonstrates that our optimizers, particularly R-Kalman in low-data regimes and RING/RENG in large-data regimes, achieve comparable or superior performance to established baselines like Adam, Sophia, and NGD in fine-tuning tasks. 

\bibliography{References}

\clearpage
\section*{Checklist}



\begin{enumerate}

  \item For all models and algorithms presented, check if you include:
  \begin{enumerate}
    \item A clear description of the mathematical setting, assumptions, algorithm, and/or model. [Yes]
    \item An analysis of the properties and complexity (time, space, sample size) of any algorithm. [Yes. Please refer to \autoref{Computational Complexity}]
    \item (Optional) Anonymized source code, with specification of all dependencies, including external libraries. [No. Some parts of the implementation depend on confidential components from a collaborative project and cannot be shared publicly. We will release the code once it can be made publicly available.]
  \end{enumerate}

  \item For any theoretical claim, check if you include:
  \begin{enumerate}
    \item Statements of the full set of assumptions of all theoretical results. [Yes]
    \item Complete proofs of all theoretical results. [Yes. Please refer to \autoref{Proofs}]
    \item Clear explanations of any assumptions. [Yes]     
  \end{enumerate}

  \item For all figures and tables that present empirical results, check if you include:
  \begin{enumerate}
    \item The code, data, and instructions needed to reproduce the main experimental results (either in the supplemental material or as a URL). [Yes. Please refer to \autoref{sec: Experiments Setup}, \autoref{Algorithm}, and \autoref{Hyperparameters}]
    \item All the training details (e.g., data splits, hyperparameters, how they were chosen). [Yes. Please refer to \autoref{sec: Experiments Setup}, \autoref{Algorithm}, and \autoref{Hyperparameters}]
    \item A clear definition of the specific measure or statistics and error bars (e.g., with respect to the random seed after running experiments multiple times). [Yes]
    \item A description of the computing infrastructure used. (e.g., type of GPUs, internal cluster, or cloud provider). [Yes]
  \end{enumerate}

  \item If you are using existing assets (e.g., code, data, models) or curating/releasing new assets, check if you include:
  \begin{enumerate}
    \item Citations of the creator If your work uses existing assets. [Yes]
    \item The license information of the assets, if applicable. [Not Applicable]
    \item New assets either in the supplemental material or as a URL, if applicable. [Not Applicable]
    \item Information about consent from data providers/curators. [Yes]
    \item Discussion of sensible content if applicable, e.g., personally identifiable information or offensive content. [Not Applicable]
  \end{enumerate}

  \item If you used crowdsourcing or conducted research with human subjects, check if you include:
  \begin{enumerate}
    \item The full text of instructions given to participants and screenshots. [Not Applicable]
    \item Descriptions of potential participant risks, with links to Institutional Review Board (IRB) approvals if applicable. [Not Applicable]
    \item The estimated hourly wage paid to participants and the total amount spent on participant compensation. [Not Applicable]
  \end{enumerate}

\end{enumerate}

\clearpage
\appendix
\thispagestyle{empty}

\onecolumn
\aistatstitle{Technical Appendices and Supplementary Material}

\section{Related work}
\label{Related work}

\subsection{Gradient Regularization}
\label{app:Gradient Regularization}

The generalization of deep neural networks has been closely linked to implicit and explicit regularization mechanisms, which emerge from factors such as finite-step stochastic updates and the use of relatively small batch sizes during optimization. A central theme in prior work has been devoted to connecting properties of the loss landscape—particularly the sharpness of local minima, as well as the flatness and curvature of the loss surface—to the generalization behavior of SGD.

A substantial body of work has examined how SGD induces implicit gradient regularization, which favors flat minima and thereby improves generalization \cite{neyshabur2017implicit} \cite{chaudhari2018stochastic}. Some studies have investigated explicit and implicit gradient regularization of SGD in deep learning \cite{smith2021on} \cite{barrett2020igr}. Using Euler discretization of stochastic differential equations for SGD and backward error analysis, they found that the discrete-time update of the usual gradient descent implicitly regularizes the norm of stochastic gradients when its dynamics are mapped to the continuous-time counterpart. To identify these so-called "flat-minima," previous works have also endeavoured towards finding different generalization metrics that can classify a sharp/flat minima. Path-SGD proposed by \cite{neyshabur2015path}, which uses a path-wise norm regularization to achieve better generalization, albeit incurring computational overhead in estimating such a factor. Frobenius norm and spectral norm of the second-order preconditioner, specifically the Hessian matrix, have also been proposed to classify flat minima \cite{wu2017towards} \cite{keskar2017improving}. The local entropy of the loss landscape has also been proposed by \cite{chaudhari2019entropy}, which defines an entropy-guided SGD and shows faster convergence, but for small models on small datasets. Recently, \cite{jastrzebski2021catastrophic} and \cite{karakida2023understanding} have shown that implicit regularization effects of large learning rates can be well explained by the trace of the Fisher information matrix from an early phase of training and have empirically demonstrated an effect called \textit{catastrophic Fisher explosion} for ResNet models on a subset of ImageNet. Work by \cite{jia2020information} proposes a regularizer based on the determinant of FIM, which can bias the optimization trajectory towards finding good local minima (minima with better generalization) and also provide a PAC-based generalization bound.

\subsection{Frequentist Approach:}
\label{app:Frequentist Optimization}

In the Frequentist optimization approach, a loss function is minimized, typically by gradient-based methods. Gradients are treated as deterministic quantities, and optimization algorithms compute the steepest descent direction using gradient-based first-order or second-order algorithms.

Stochastic gradient descent (SGD) \citep{robbins1951sgd} and its variants have long been the backbone of training large-scale neural networks. Among first-order methods, momentum-based techniques like Nesterov Accelerated Gradient (NAG) \citep{nesterov1983method}, and adaptive algorithms such as Adam \citep{kingma2014adam}, Adagrad \citep{duchi2011adaptive}, AdaDelta \citep{zeiler2012adadelta}, RMSProp \citep{graves2013rmsprop}, RADAM \citep{liu2019variance-radam}, and AdaBelief \citep{adabelief_zhuang} have gained considerable popularity due to their ability to adapt learning rates to the geometry of the loss landscape.
While SGD is simple and memory-efficient, its constant learning rate can lead to inconsistent parameter updates in the inherently non-Euclidean geometry of deep neural networks. Adaptive optimizers address this by adjusting the learning rate based on local curvature, often leading to faster convergence and improved performance in certain settings. For example, Adam uses the square root of the diagonal of the estimated second moment (variance) of gradients to scale the update direction. However, this often results in instability at high learning rates.
To address such issues, RADAM introduces a variance rectification mechanism to justify the warm-up phase in Adam. 

Empirical studies highlight that while SGD (with or without momentum) performs competitively in computer vision tasks, adaptive optimizers like Adam significantly outperform SGD in training transformer-based models for NLP tasks \citep{zhang2024transformers}. However, the effectiveness of Adam remains poorly understood. One explanation is the presence of heavy noise in attention layers \citep{adaptive_attention}, which affects the performance of SGD more severely. \citep{kunstner2023noise} observed that Adam continues to outperform SGD even with large batch sizes, attributing this to the signed nature of Adam’s updates. Indeed, Adam can be seen as a variance-adapted signed descent method \citep{adabelief_zhuang, pmlr-v80-balles18a}.
Another hypothesis is rooted in the geometry of the optimization problem. \citep{zhang2024transformers} argues that the heterogeneous Hessian spectrum in attention layers contributes to slow convergence under SGD. This aligns with the theoretical insights of \citep{thomas2020interplay}, who noted that the Hessian reflects curvature under the data distribution, which may result in poor generalization and data inefficiency.

Given the limitations of both SGD and Adam, second-order information has been explored to better precondition gradient updates. While the Hessian provides curvature information, its computation is expensive and, more critically, it may not be positive semi-definite due to the non-convex nature of deep learning objectives. Indeed, empirical studies \citep{sagun2016eigenvalues, sagun2017empirical, pmlr-v97-ghorbani19b-hesseigdensity} show that the Hessian spectrum contains negative eigenvalues, particularly near convergence.
An alternative is the Fisher Information Matrix (FIM), which is always positive semi-definite and defined as the covariance of gradients of the negative log-likelihood loss under the model distribution \citep{amari1998natural}. The natural gradient \citep{exactngd} leverages the inverse FIM to scale gradients appropriately. Although computing and inverting the FIM is computationally expensive, several approximation methods have made natural gradient descent feasible. K-FAC \citep{martens2015optimizing} decomposes the FIM into Kronecker factors for efficient inversion. Shampoo \citep{gupta2018shampoo} estimates curvature using stochastic gradients, while \citep{zhang2023eva} proposes a memory-efficient rank-one update scheme.
More recently, \citep{liu2023sophia} proposed Sophia, which leverages a diagonal approximation of the Hessian and employs coordinate-wise clipping to mitigate rapid changes in curvature due to small batch sizes. Similarly, \citep{pan2023understanding} emphasized the role of directional sharpness in adaptive optimizers and showed that coordinate-wise clipping consistently improves optimization stability.
Overall, these approaches highlight the importance of leveraging second-order information—particularly the Fisher matrix—in training large models like transformers, where curvature is both heterogeneous and evolving.

Beyond optimization speed, several studies have also investigated how optimization strategies influence generalization. Regularization techniques such as Double Backpropagation \citep{doublebp} were early attempts to control curvature by penalizing gradient norms. More recent works \citep{Sankar_2021_HessianEigenspec} suggest that regularizing the trace of the Hessian improves generalization in SGD. Similarly, \citep{barrett2020igr} demonstrates that gradient regularization can promote flatter minima, thereby enhancing robustness to noise and improving test-time performance.

\subsection{Bayesian Approach:}
\label{app:Bayesian Optimization}

Bayesian approaches interpret the trainable parameters as random variables and aims to infer their posterior distribution given the data by applying Bayes’ theorem \citep{murphy2023probabilistic, murphy2012machine}, which is known as \textit{exact Bayesian inference}.
The exact Bayesian inference is computationally intractable for high-dimensional deep models due to the need to integrate over the entire possible values in the estimation of the posterior and predictive distribution. Consequently, approximate methods are employed, which can be broadly classified into sampling-based and approximate-inference approaches \citep{khan2023bayesian}.

Sampling methods like Markov Chain Monte Carlo (MCMC) \citep{gelfand1990sampling}, including Hamiltonian Monte Carlo (HMC) \citep{neal2012bayesian} and Metropolis-Hastings (MH) \citep{chib1995understanding} work by building a Markov chain whose equilibrium distribution matches the target posterior. Then, one can generate samples that approximate the posterior distribution by running the chain for many iterations. These sampling-based approaches are either infeasible or slow for deep learning with a large parameter space \citep{khan2023bayesian}.

Approximate-inference methods like Variational Inference (VI) \citep{jordan1999introduction} approximate the posterior with a simpler distribution $q(\vpsi)$ (e.g., a Gaussian) and optimize its parameters $\vpsi$ to minimize the KL divergence of $q(\vpsi)$ given $p(\vtheta \mid \mathD)$. This is equivalent to minimizing the variational loss (or negative ELBO) $\mathcal{L}(\psi) = -\sE_{q(\vpsi)}[\text{log } p(\mathD \mid \vtheta)]-\text{KL}(q(\vpsi) \mid p(\vtheta))$ \citep{li2024uncertainty}. 
Laplace algorithm \citep{mackay1992bayesian}, as another approximate-inference method, approximates the posterior as a Gaussian with the mean at the maximum a posteriori (MAP) estimate, and covariance of the Hessian of the log-posterior.
These approximate inference algorithms use a Frequentist optimization approach, mostly first-order gradient-based techniques, which rely solely on gradient information, like \textit{Bayes by Backprop} introduced by \citep{blundell2015weight}.
While many early methods relied on first-order optimization, recent studies have demonstrated the advantages of second-order methods for optimizing the variational loss. Works such as \citep{lin2024can}, \citep{shen2024variational}, and \citep{khan2023bayesian} employ second-order optimization techniques and leverage the curvature information of the variational loss to enable more informative updates, leading to faster convergence and better uncertainty estimation.

The Extended Kalman Filter (EKF) provides a straightforward solution for approximate Bayesian inference, and enables closed-form updates for the posterior distribution over neural network parameters \citep{jones2024bayesian}. Recent studies \citep{ollivier2018online, ollivier2019extended} demonstrate that the EKF update rule is equivalent to an online natural gradient descent.
This effectively offers a new perspective into second-order optimization: instead of computing various large pre-conditioning matrices and their inversions, we can recursively infer the natural direction update as the parameter update step in the Kalman algorithm.
In contrast to NGD, the Kalman formulation doesn't require any FIM inversion, which is computationally expensive for large models. Although promising, the practical implementation of such techniques remained infeasible for large models with millions of parameters because of the huge size of the matrix involved in the Kalman algorithm. Various studies have since addressed the scalability challenges of the EKF optimizer. 
For instance, \citep{chang2022diagonal, abdi2025bayesian, abdi2024loko} introduced a diagonal covariance matrix approximation, while \citep{chang2023low} proposed a low-rank plus diagonal decomposition of the posterior precision matrix. \citep{hennig2024computation} developed a matrix-free iterative algorithm to further enhance efficiency. 

In parallel, other works have explored the intersection of Bayesian inference and natural gradient methods. The Topmoumoute online natural gradient algorithm \citep{roux2007topmoumoute} pioneered an online formulation of the natural gradient with connections to information geometry. Building on this idea, \citep{khan2023bayesian} bridges natural gradient descent and Bayesian learning by proposing a general update rule based on the principle of minimizing the divergence between successive posterior distributions. More recently, \citep{jones2024bayesian} extended this framework to variational inference in neural networks and introduced the Bayesian Online Natural Gradient (BONG) algorithm.

\clearpage

\section{Algorithm}
\label{Algorithm}

The pseudocode of the proposed algorithms has been shown in Algorithm \ref{alg:ring-reng} and Algorithm \ref{alg:kalman}.

\begin{algorithm}[H]
\caption{\textcolor{blue}{RING:} Regularized Implicit Natural Gradient\\
\hspace{2em} \textcolor{green}{RENG:} Regularized Explicit Natural Gradient}
\label{alg:ring-reng}
\begin{algorithmic}[1]
    \STATE {\bf Initialization:} $\mW_i$ $\forall{i \in [1,L]}$, number of weight layers $L$, learning rate $\alpha$, gradient regularizer $\rho$, lambda $\lambda$, lambda discount factor $\phi$, Fisher skip frequency $S$.
    \FOR{$k = 1, 2, ...$}
    \STATE Compute minibatch loss $\mathcal{L}_k$ and gradients $\nabla \ln p(\vy|\vx,\vtheta)$.
    \IF{ $k\ \mathrm{mod}\ S = 0$}
    \STATE \textcolor{green}{RENG:} Use the gradient to calculate the regularized loss in \autoref{eq:grad_reg_loss}
    \STATE Save the activation matrix $\sE_{p(\vy|\vx,\vtheta)}[ \boldsymbol{x}_{i-1} \boldsymbol{x}_{i-1}^{\top} ]$ and batch gradients $\sE_{p^*(\vx,\vy)}[ \boldsymbol{e}_i \boldsymbol{x}_{i-1}^{\top} ]$.
    \STATE  Sample output $\vy \sim p(\vy|\vx,\vtheta)$ and compute gradient.
    \STATE Save the error matrix $\sE_{p(\vy|\vx,\vtheta)}[ \boldsymbol{e}_i \boldsymbol{e}_i^{\top} ]$.
    \STATE Update $\mW_i$ using \textcolor{blue}{RING:} \autoref{Eq.RING}, or \textcolor{green}{RENG:} \autoref{Eq. RENG}.
    \ELSE 
    \STATE Update regularized activation and error matrices via \autoref{eq:inverse_update}.
    \STATE Update $\mW_i$ using \textcolor{blue}{RING:} \autoref{Eq.RING}, or \textcolor{green}{RENG:} \autoref{Eq. RENG}.
    \ENDIF
    \ENDFOR
\end{algorithmic}
\end{algorithm}

\begin{algorithm}
\caption{Regularized Kalman Algorithm}
\label{alg:kalman}
\begin{algorithmic}[1]
\STATE \textbf{Initialization:}
    $p(\vtheta_0) = \mathN(\vmu_0,\mSigma_0)$,   $\mR_{0} = \mO_{d_o \times d_o}, \mQ = \mO_{d_i \times d_i}$\\
    \FOR{$k = 1, 2, ... $}
        \STATE \textbf{Prediction:}
        \STATE $\vmu_{k|k-1} = \vmu_{k-1}$
        \STATE $\mSigma_{k|k-1} = \mSigma_{k-1} + \mQ$
        \STATE \textbf{Pre-Updating:}
        \STATE $\hat{\vy}_k \simeq h(\vx_{k}, \vmu_{k \mid k-1})$
        \STATE $\mH_k = \nabla_{\vtheta}h|_{(\vx_{k}, \vmu_{k \mid k-1})}$
        \STATE $\hat{\mR}_k = \left(\vy_k - \hat{\vy}_k\right)\left(\vy_k - \hat{\vy}_k\right)^{\top} + \mH_k \mSigma_{k|k-1} \mH_k^{\top}$
        \STATE $\mR_k = \beta \mR_{k-1} + (1-\beta) \hat{\mR}_k $
        \STATE \textbf{Updating:}
        \STATE $\tilde{\mK}_k = \mSigma_{k|k-1} \mH_k^{\top} \left(\mH_k \mSigma_{k|k-1} \mH_k^{\top} + \mR_k (\mI + \rho \mR_k)^{-1} \right)^{-1}$
        \STATE $\vmu_{k} = \vmu_{k|k-1} + \tilde{\mK}_k (\vy_k - \hat{\vy}_k)$
        \STATE  $\mSigma_{k} = \mSigma_{k|k-1} - \tilde{\mK}_k \mH_k \mSigma_{k|k-1}$
        \STATE \textbf{Output:}
         Posterior: $p(\vtheta_k \mid \mathD_{1:k}) = \mathN(\vmu_k, \mSigma_k)$
    \ENDFOR
\end{algorithmic}
\end{algorithm}

\clearpage

\section{Convergence Analysis of Theorem \ref{onvergence th.}}
\label{convergence}
 In this section, we provide detailed proof stating that the output of a two-layer neural network shows exponential decay with GRNG optimizers.
 
\begin{proof}
    
    Considering $\vy$ as the full-batch ground truth, $\mH$ as Jacobian matrix, and $\mG=\mH \mH^{\top}$ as Gram matrix, we introduce a learning rate factor $\eta$ and calculate the difference of predictions between two consecutive iterations:
    $$
    \hat{\vy}_{k+1} - \hat{\vy}_{k} = \hat{\vy}\left(\vtheta_{k} - \eta (\mF_{k}+\rho\|\nabla \mathL\|^2_2)^{-1}\mH_{k}^\top (\hat{\vy}_{k} - \vy)\right) - \hat{\vy}(\vtheta_{k}) .
    $$
    
    For simplicity, we use $\tilde{\rho}_k$ instead of $\rho\|\nabla \mathL_k\|^2_2$ and later on we use the expression $\nabla \mathL_k = \mH_{k}^\top (\hat{\vy}_{k} - \vy)$ to get the final bound:
    $$
     \hat{\vy}_{k+1} - \hat{\vy}_{k} = \hat{\vy}\left(\vtheta_{k} - \eta \mH_{k}^\top (\mG_{k}+\tilde{\rho}_k \mI)^{-1} (\hat{\vy}_{k} - \vy)\right) - \hat{\vy}(\vtheta_{k})
    $$
    
    $$
    = -\eta \int_{s=0}^1 \left\langle \frac{\partial \hat{\vy}(\vtheta(s))}{\partial \vtheta^\top},  \mH_{k}^\top (\mG_{k}+\tilde{\rho}_k \mI)^{-1}(\hat{\vy}_{k} - \vy) \right\rangle ds
    $$
    
    $$
    \leq -\eta \int_{s=0}^1 \left\langle \frac{\partial \hat{\vy}(\vtheta(s))}{\partial \vtheta^\top},  \mH_{k}^\top (\mG_{k}^{-1}-\tilde{\rho}_k\mG_{k}^{-1}\mG_{k}^{-1})(\hat{\vy}_{k} - \vy) \right\rangle ds
    $$
    
    $$
    = -\eta \int_{s=0}^1 \left\langle \frac{\partial \hat{\vy}(\vtheta(s))}{\partial \vtheta^\top},  \mH_{k}^\top \mG_{k}^{-1}(\hat{\vy}_{k} - \vy) \right\rangle ds  +  \eta \int_{s=0}^1 \left\langle \frac{\partial \hat{\vy}(\vtheta(s))}{\partial \vtheta^\top},  \mH_{k}^\top \tilde{\rho}_k\mG_{k}^{-1}\mG_{k}^{-1}(\hat{\vy}_{k} - \vy) \right\rangle ds
    $$
    
    $$
    = \underbrace{-\eta \int_{s=0}^1 \left\langle \frac{\partial \hat{\vy}(\vtheta_{k})}{\partial \vtheta^\top},  \mH_{k}^\top \mG_{k}^{-1}(\hat{\vy}_{k} - \vy) \right\rangle ds}_{\text{\textcircled{1}}}
    \quad + \underbrace{\eta \int_{s=0}^1 \left\langle \frac{\partial \hat{\vy}(\vtheta_{k})}{\partial \vtheta^\top} - \frac{\partial \hat{\vy}(\vtheta(s))}{\partial \vtheta^\top},  \mH_{k}^\top \mG_{k}^{-1}(\hat{\vy}_{k} - \vy) \right\rangle ds}_{\text{\textcircled{2}}} 
    $$
    
    $$
    + \underbrace{\eta \int_{s=0}^1 \left\langle \frac{\partial \hat{\vy}(\vtheta_{k})}{\partial \vtheta^\top},  \mH_{k}^\top \tilde{\rho}_k\mG_{k}^{-1}\mG_{k}^{-1}(\hat{\vy}_{k} - \vy) \right\rangle ds}_{\text{\textcircled{3}}}
    \quad - \underbrace{\eta \int_{s=0}^1 \left\langle \frac{\partial \hat{\vy}(\vtheta_{k})}{\partial \vtheta^\top} - \frac{\partial \hat{\vy}(\vtheta(s))}{\partial \vtheta^\top},  \mH_{k}^\top \tilde{\rho}_k\mG_{k}^{-1}\mG_{k}^{-1}(\hat{\vy}_{k} - \vy) \right\rangle ds}_{\text{\textcircled{4}}}
    $$
    
    To evaluate the L2 norms, we define $\alpha_k = 1 + \rho \kappa(\mG) \| \vy - \hat{\vy}_{k} \|^2_2$, where $\kappa(\mG)$ is the condition number, i.e., $\lambda_{max}(\mG)/\lambda_{min}(\mG)$; and $\lambda(\cdot)$ is the eigenvalue operator. Observing that terms $\text{\textcircled{1}}$ and $\text{\textcircled{3}}$ do not depend on $s$, and using the spectral bound from \cite{zhang2019fast} for $\text{\textcircled{2}}$ and $\text{\textcircled{4}}$, we can define the respective norms as: 
    \begin{align*}
        \| \text{\textcircled{1}} \|_2 &= \eta \| \vy - \hat{\vy}_{k} \|_2 \\
        \| \text{\textcircled{2}} \|_2 &= \eta C \| \vy - \hat{\vy}_{k} \|_2 \\
        \| \text{\textcircled{3}} \|_2 &= \eta \alpha_k \| \vy - \hat{\vy}_{k} \|_2 \\
        \| \text{\textcircled{4}} \|_2 &= \eta C \alpha_k \| \vy - \hat{\vy}_{k} \|_2
    \end{align*}
    
    Hence, the total magnitude of change between iterations is bounded by:
    $$
    \| \hat{\vy}_{k+1} - \hat{\vy}_{k} \|^2_2 \leq \eta^2 (1 + \alpha_k)^2(1 + C)^2 \| \vy - \hat{\vy}_{k} \|^2_2 .
    $$
    
    To examine convergence relative to the initial error, we expand the squared output error at the $(k+1)^{th}$ iteration:
    \begin{align*}
        \| \vy - \hat{\vy}_{k+1} \|^2_2 &= \| (\vy - \hat{\vy}_{k}) - (\hat{\vy}_{k+1} - \hat{\vy}_{k}) \|^2_2 \\
        &= \| \vy - \hat{\vy}_{k} \|^2_2 + \| \hat{\vy}_{k+1} - \hat{\vy}_{k} \|^2_2 - 2 \langle \vy - \hat{\vy}_{k}, \hat{\vy}_{k+1} - \hat{\vy}_{k} \rangle \\
        &\leq \| \vy - \hat{\vy}_{k} \|^2_2 + \eta^2(1 + \alpha_k)^2(1 + C)^2\| \vy - \hat{\vy}_{k} \|^2_2 - 2\eta(1 + \alpha_k)(1 + C)\| \vy - \hat{\vy}_{k} \|^2_2 \\
        &\leq \left[ 1 + \eta^2(1 + \alpha_k)^2(1 + C)^2 - 2\eta(1 + \alpha_k)(1 + C) \right] \| \vy - \hat{\vy}_{k} \|^2_2 .
    \end{align*}
    
    For the residual error to strictly decay, we require the constant factor to be less than or equal to $(1 - \eta)$:
    $$
    \| \vy - \hat{\vy}_{k+1} \|^2_2 \leq (1 - \eta) \| \vy - \hat{\vy}_{k} \|^2_2 .
    $$
    
    This condition is satisfied when:
    $$
    1 + \eta^2(1 + \alpha_k)^2(1 + C)^2 - 2\eta(1 + \alpha_k)(1 + C) \leq 1 - \eta .
    $$
    
    By rearranging the terms, we find the equivalent upper bound for the learning rate:
    $$
    \eta \leq \frac{2(1 + \alpha_k)(1 + C) - 1}{(1 + \alpha_k)^2(1 + C)^2} .
    $$
    
    As $1 + \alpha_k > 1$ and $1 + C > 1$, this makes the right-hand side expression positive, guaranteeing a feasible step size for global convergence. Furthermore, the denominator grows quadratically and the numerator grows linearly, so the maximum permissible $\eta$ is bounded predominantly by the smallest value of $\alpha_k$. 
\end{proof}

From this, let us define $\hat{M} = \max_k \frac{2(1 + \alpha_k)(1 + C) - 1}{(1 + \alpha_k)^2(1 + C)^2}$. When $\eta$ satisfies $\eta \leq \hat{M}$, the loss exhibits geometric decay relative to the functional output of the initial iteration:
$$
    \| \vy - \hat{\vy}_{k+1} \|^2_2 \leq (1 - \eta)^{k+1} \| \vy - \hat{\vy}_{0} \|^2_2 .
$$
And we finally prove Corollary \ref{corr onvergence th.}.




\clearpage

\section{Proofs and Detailed Derivations}
\label{Proofs}
This section presents the proofs and detailed derivations of the proposed algorithms, offering a comprehensive understanding of their underlying principles and mechanisms.

\subsection{Proof of \autoref{eq:inverse_update}}
\label{Proof:pro:inverse_update}

\begin{proof}
    Let us assume that at iteration $k$ we have a damping coefficient $\rho_k$, and have computed the activation and error matrices of each layer. Based on \textit{Lazy Fisher}, we assume that ${\mLambda_{i-1}}_{k+1} \approx {\mLambda_{i-1}}_{k}$, and ${\mGamma_{i-1}}_{k+1} \approx {\mGamma_{i-1}}_{k}$. However, the damping coefficient $\rho_{k+1}$ is allowed to be updated. From matrix differential theory, we know that for an invertible matrix $\mA$, we have $\mA \mA^{-1} = \mI$, then the differential of the inverse is given by: $d\mA^{-1} = - \mA^{-1} d\mA \mA^{-1}$. In our context, $\mA_k$ represents either the regularized activation matrix $\mA_k = \tilde{\mLambda}_{k} = (\mLambda_k + \lambda_k \mI)$ or the regularized error matrix $\mA_k = \tilde{\mGamma}_{k} = (\mGamma_k + \lambda_k \mI)$. Under the \textit{Lazy Fisher} assumption, where the error/activation matrix remains unchanged across consecutive timesteps, and only the damping factor evolves, we approximate the change in the matrix as $d\mA_k \approx (\lambda_{k+1} - \lambda_{k})\mI = d\lambda_k \mI$. Then, we add $d\mA_{k}^{-1}$ to the $\mA_k^{-1}$ to approximate the $\mA_{k+1}^{-1}$ without explicitly computing the inversion of $\mA_{k+1}$. Hence, for each layer $i$, the new inverted matrix can be approximated as $\mA_{k+1}^{-1} \approx \mA_{k}^{-1}+d\mA_k^{-1}$.
\end{proof}

\subsection{Proof of \autoref{eq:grad_weight_update}}
\label{Proof:eq:grad_weight_update}

\begin{remark}
\label{remark:ngd}
    The update direction $\vdelta^{\ast} = -\mF(\vtheta)^{-1} \nabla_{\vtheta} \mathcal{L}(\vtheta)$ is the unique minimizer of the quadratic approximation of the loss function $\mathcal{L}$ around the point $\vtheta$, given by: 
    \begin{equation*}
      \mathcal{L}(\vtheta + \vdelta) = \mathcal{L}(\vtheta) + \vdelta^T \nabla_{\vtheta} \mathcal{L}(\vtheta) + \frac{1}{2} \vdelta^{\top} \mF(\vtheta) \vdelta + \mathO(|\vdelta|^3).
    \end{equation*} 
\end{remark}

\begin{remark}
\label{remark:tikh}
    Considering the regularized loss function:
    \begin{equation*}
    \mathcal{L}_T(\vtheta + \vdelta) = \mathcal{L}(\vtheta) + \vdelta^T \nabla_{\vtheta} \mathcal{L}(\vtheta) + \frac{1}{2} \big[ \vdelta^{\top} \mF(\vtheta) \vdelta + \rho \vdelta^{\top} \vdelta \big] + \mathO(|\vdelta|^3).
    \end{equation*}
     This expression can be rewritten as: 
     \begin{equation*}
    \mathcal{L}_T(\vtheta + \vdelta) = \mathcal{L}(\vtheta) + \vdelta^T \nabla_{\vtheta} \mathcal{L}(\vtheta) + \frac{1}{2} \vdelta^{\top} \big(\mF(\vtheta)+\rho \mI \big) \vdelta + \mathO(|\vdelta|^3).
    \end{equation*}
     Thus, based on Remark \ref{remark:ngd}, the new natural direction will be: 
     \begin{equation*}
    \vdelta^{\ast} = - (\mF(\vtheta) + \rho \mI)^{-1} \nabla_{\vtheta} \mathcal{L}(\vtheta).
    \end{equation*}
\end{remark}

\begin{proof}
    Considering the regularized loss function expressed in \autoref{eq:grad_reg_loss}, and using the Remark \ref{remark:ngd} and \ref{remark:tikh}, one can find the optimum update direction as: 
    \begin{equation*}
    \vdelta^{\ast} = - \Big(\mF(\vtheta) + \rho \norm{\sE_{p(\vy|\vx,\vtheta)}[\nabla_{\vtheta} \mathcal{L}(\vtheta)]}^2_2 \mI \Big)^{-1} \nabla_{\vtheta} \mathcal{L}(\vtheta) .
    \end{equation*}
     It can be expressed by activation and error matrices as: 
     \begin{equation*}
    \vdelta^{\ast} = (\mLambda_i \otimes \mGamma_i + \rho \norm{\mLambda_i \otimes \mGamma_i}  \mI)^{-1} \nabla_{\vtheta} \mathcal{L}(\vtheta) ,
    \end{equation*}
      which is:
      \begin{equation*}
    \vdelta^{\ast} = (\mLambda_i \otimes \mGamma_i + \rho \norm{\mLambda_i} \norm{\mGamma_i}  \mI)^{-1} \nabla_{\vtheta} \mathcal{L}(\vtheta) .
    \end{equation*}
     Again, using the approximation $(A \otimes B + \rho I) \approx (A + \sqrt\rho I) \otimes (B + \sqrt\rho I)$, we have:
     \begin{equation*}
    \vdelta^{\ast} = [\mLambda_i + \sqrt{\rho} \norm{\mLambda_i} \mI]^{-1} \otimes [\mGamma_i + \sqrt{\rho} \norm{\mGamma_i} \mI]^{-1} \nabla_{\vtheta} \mathcal{L}(\vtheta) .
    \end{equation*}
     
\end{proof}

\subsection{Proof of \autoref{Eq. Algorithm Updating}}
\label{Proof:information filter}

\begin{proof}

The information form of the Kalman filter (information filtering) can be derived from its standard formulation.
Substituting the Kalman gain from \autoref{Eq. Algorithm diag gain update}, into the standard formulation of covariance update ($\mSigma_{k} = \mSigma_{k|k-1} - \mK_k \mH_k \mSigma_{k|k-1}$) yields:
\begin{equation*}
\mSigma_{k} = \mSigma_{k|k-1} - \mSigma_{k|k-1} \mH_k^{\top} \left(\mH_k \mSigma_{k|k-1} \mH_k^{\top} + \mR_k\right)^{-1} \mH_k \mSigma_{k|k-1} .
\end{equation*}
The Woodbury matrix identity states: $(\mA + \mU\mC\mV)^{-1} = \mA^{-1}-\mA^{-1}\mU(\mC^{-1}+\mV\mA^{-1}\mU)^{-1}\mV\mA^{-1}$. By identifying $\mA = \mSigma^{-1}_{k|k-1}$, $\mU = \mH_k^{\top}$, $\mC = \mR_k^{-1}$, and $\mV = \mH_k$, we apply the identity to obtain the information form of the covariance update: $\mSigma^{-1}_{k} = \mSigma_{k|k-1}^{-1} + \mH_k^{\top} \mR_{k}^{-1} \mH_k$. 

\end{proof}

\subsection{On the Equivalence of Kalman Filtering and NGD}
\label{Proof:Kalman_NGD}

\begin{lemma}
\label{lemma:modified_kamlan_update}

The update step in the Kalman algorithm, as given in \autoref{Eq. Algorithm Updating Process} is equivalent to: 
\begin{align*} 
    \vmu_{k} = \vmu_{k|k-1} - \mSigma_{k} \nabla_{\vtheta} \mathcal{L}_k
    \label{eq:posterior_mean_update} 
\end{align*}

\end{lemma}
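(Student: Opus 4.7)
The plan is to reduce the stated identity to two independent facts: (i) an algebraic rewriting of the Kalman gain $\mK_k$ in terms of the posterior covariance $\mSigma_k$, and (ii) the standard relation between the score $\nabla_{\vtheta}\mathcal{L}_k$ and the model output residual $\vy_k - \hat{\vy}_k$ for an exponential-family likelihood. Chaining the two then converts the additive Kalman correction into a natural-gradient-style update preconditioned by the posterior covariance.

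First I would establish the gain rewriting $\mK_k = \mSigma_k \mH_k^{\top}\mR_k^{-1}$. Starting from the predicted form in \autoref{Eq. Algorithm diag gain update}, I would apply the Woodbury (push-through) identity to the inner inverse $(\mH_k\mSigma_{k|k-1}\mH_k^{\top}+\mR_k)^{-1}$ in exactly the same way as in Appendix~\autoref{Proof:information filter}, obtaining
\begin{equation*}
    \mK_k = \bigl(\mSigma_{k|k-1}^{-1} + \mH_k^{\top}\mR_k^{-1}\mH_k\bigr)^{-1}\mH_k^{\top}\mR_k^{-1}.
\end{equation*}
The bracketed inverse is precisely $\mSigma_k$ by the information-form update \autoref{Eq. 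Algorithm diag cov update}, which gives the claimed factorization. This step reuses the algebra already invoked in the appendix and is essentially bookkeeping.

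Second I would compute the score. Using $\mathcal{L}_k = -\ln p(\vy_k\mid \vx_k,\vtheta)$ and the chain rule $\nabla_{\vtheta}\mathcal{L}_k = (\nabla_{\hat{\vy}}\mathcal{L}_k)\,\nabla_{\vtheta}\hat{\vy}_k = \mH_k^{\top}\nabla_{\hat{\vy}}\mathcal{L}_k$, together with the exponential-family identity $\nabla_{\hat{\vy}}\mathcal{L}_k = -\mR_k^{-1}(\vy_k-\hat{\vy}_k)$ (which also underlies the identification $\mR_k^{-1} = \nabla_{\hat{\vy}}^2\mathcal{L}_k$ used when deriving the Fisher information in the main text), I obtain
\begin{equation*}
    \nabla_{\vtheta}\mathcal{L}_k = -\mH_k^{\top}\mR_k^{-1}\bigl(\vy_k - \hat{\vy}_k\bigr),
\end{equation*}
evaluated at the linearization point $(\vx_k,\vmu_{k\mid k-1})$ so that $\hat{\vy}_k = h(\vx_k,\vmu_{k\mid k-1})$.

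Substituting both results into the Kalman mean update \autoref{Eq. Algorithm Updating Process} yields
\begin{equation*}
    \vmu_k = \vmu_{k\mid k-1} + \mSigma_k\mH_k^{\top}\mR_k^{-1}\bigl(\vy_k - \hat{\vy}_k\bigr) = \vmu_{k\mid k-1} - \mSigma_k\nabla_{\vtheta}\mathcal{L}_k,
\end{equation*}
which is the claim. I expect the main subtle point to be step~(ii): justifying $\nabla_{\hat{\vy}}\mathcal{L}_k = -\mR_k^{-1}(\vy_k-\hat{\vy}_k)$ cleanly for the \emph{general} exponential-family case used in the paper (not just the Gaussian), where one must identify $\mR_k = \Cov(T(\vy_k)\mid\hat{\vy}_k)$ with the inverse of the log-partition Hessian and argue that the natural-parameter gradient of the negative log-likelihood is $-(T(\vy_k)-\sE[T(\vy_k)\mid\hat{\vy}_k])$ up to this covariance-based preconditioning; everything else is purely linear-algebraic.
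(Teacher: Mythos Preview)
Your proposal is correct and follows the same two-step skeleton as the paper: rewrite the Kalman gain in posterior form and identify the score with $-\mH_k^{\top}\mR_k^{-1}(\vy_k-\hat{\vy}_k)$. The only substantive difference is how the gain identity $\mK_k\mR_k=\mSigma_k\mH_k^{\top}$ (equivalently $\mK_k=\mSigma_k\mH_k^{\top}\mR_k^{-1}$) is obtained. You push Woodbury through the inner inverse and then recognize the bracketed matrix as $\mSigma_k$ via the \emph{information-form} covariance update; the paper instead writes $\mK_k\mR_k=\mK_k(\mR_k+\mH_k\mSigma_{k|k-1}\mH_k^{\top})-\mK_k\mH_k\mSigma_{k|k-1}\mH_k^{\top}$, simplifies the first term using the gain definition, and identifies the remainder as $\mSigma_k\mH_k^{\top}$ via the \emph{standard} covariance update $\mSigma_k=(\mI-\mK_k\mH_k)\mSigma_{k|k-1}$. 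Your route is arguably cleaner: the paper's version first solves for the residual as $(\vy_k-\hat{\vy}_k)=-\mR_k(\mH_k^{\top})^{-1}\nabla_{\vtheta}\mathcal{L}_k$ and carries a factor $(\mH_k^{\top})^{-1}$ through until it cancels, which is formally ill-defined since $\mH_k\in\sR^{d_o\times n}$ is not square; your substitution keeps $\mH_k^{\top}\mR_k^{-1}(\vy_k-\hat{\vy}_k)$ intact and never needs to invert a non-square Jacobian.
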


\begin{proof}
    
To prove Lemma \ref{lemma:modified_kamlan_update}, consider the negative log-likelihood loss function at step $k$, for a Gaussian or exponential family distribution, defined as: $\mathcal{L}_k = \frac{1}{2}(\hat{\vy}_k-\vy_k)^{\top} \mR_{k}^{-1} (\hat{\vy}_k-\vy_k)$. Applying the chain rule, the gradient of the loss with respect to the parameters is given by: $\nabla_{\vtheta} \mathcal{L}_k = \nabla_{\vtheta}^{\top} \hat{\vy} \nabla_{\hat{\vy}} \mathcal{L}$. We note that $\nabla_{\vtheta} \hat{\vy} = \mH_k$, and $\nabla_{\hat{\vy}} \mathcal{L} = \mR_k^{-1} (\hat{\vy}_k-\vy_k)$. Therefore, the gradient becomes $\nabla_{\vtheta} \mathcal{L}_k = \mH_k^{\top} \mR_k^{-1} (\hat{\vy}_k-\vy_k)$. Rearranging, we obtain the prediction error as $ (\vy_k-\hat{\vy}_k) = - \mR_k {\mH_k^{\top}}^{-1} \nabla_{\vtheta} \mathcal{L}_k$. Substituting this expression into the Kalman filter for the parameter update \autoref{Eq. Algorithm Updating Process}, we get:
\begin{equation*}
\vmu_{k} = \vmu_{k|k-1} - \mK_k \mR_k {\mH_k^{\top}}^{-1} \nabla_{\vtheta} \mathcal{L}_k .
\end{equation*}

Next, We can replace $\mK_k \mR_k$ with $\mSigma_k\mH_k^{\top}$ because of this identity:
\begin{equation*}
\mK_k \mR_k = \mK_k(\mR_k +\mH_k \mSigma_{k|k-1} \mH_k^{\top}) - \mK_k \mH_k \mSigma_{k|k-1} \mH_k^{\top},
\end{equation*}
where based on the \autoref{Eq. Algorithm diag gain update}, we will have:
$\mK_k(\mR_k +\mH_k \mSigma_{k|k-1} \mH_k^{\top}) = \mSigma_{k|k-1} \mH_k^{\top} $.
So,
\begin{equation*}
\mK_k \mR_k = \mSigma_{k|k-1} \mH_k^{\top} - \mK_k \mH_k \mSigma_{k|k-1} \mH_k^{\top} = \big( \mSigma_{k|k-1} - \mK_k \mH_k \mSigma_{k|k-1} \big) \mH_k^{\top} .
\end{equation*}
By the standard formulation of covariance update, we know $\big( \mSigma_{k|k-1} - \mK_k \mH_k \mSigma_{k|k-1} \big) = \mSigma_{k}$, and thus: $\mK_k \mR_k = \mSigma_{k} \mH_k^{\top}$. Substituting this into the previous expression for $\vmu_k$, we obtain:
\begin{equation*}
\vmu_{k} = \vmu_{k|k-1} - \mSigma_{k} \mH_k^{\top} {\mH_k^{\top}}^{-1} \nabla_{\vtheta} \mathcal{L}_k .
\end{equation*}
Finally, simplifying yields $\vmu_{k} = \vmu_{k|k-1} - \mSigma_{k} \nabla_{\vtheta} \mathcal{L}_k$.
\end{proof}

\paragraph{Proof of Kalman–NGD Equivalence}
\label{app:fisher-kalman}

\begin{proof}
    We begin with the definition of the Fisher information matrix given in \autoref{eq:fisher}. We can rewrite it as: $\mF(\vtheta) = \sE_{p(\vy|\vx,\vtheta)}[ \nabla_{\vtheta} \mathcal{L} \cdot \nabla_{\vtheta} \mathcal{L}^{\top} ]$. At time step $k$, the new data point contributes statistical information in the form $\mF(\vtheta_k) = \nabla_{\vtheta} \mathcal{L}_k \cdot \nabla_{\vtheta} \mathcal{L}_k^{\top}$ to Fisher information matrix. Applying the chain rule, we write $\nabla_{\vtheta}\mathcal{L}_k = \nabla_{\hat{\vy}}\mathcal{L}_k \cdot \nabla_{\vtheta}\hat{\vy}_k$, which gives:
    \begin{equation*}
    \mF(\vtheta_k) = (\nabla_{\hat{\vy}}\mathcal{L}_k \cdot \nabla_{\vtheta}\hat{\vy}_k) \cdot (\nabla_{\hat{\vy}}\mathcal{L}_k \cdot \nabla_{\vtheta}\hat{\vy}_k)^{\top}.
    \end{equation*}
    Using matrix multiplication rules: $\mF(\vtheta_k) = \nabla_{\vtheta}^{\top}\hat{\vy}_k \cdot \nabla_{\hat{\vy}}^2 \mathcal{L}_k \cdot \nabla_{\vtheta}\hat{\vy}_k$. We already know that $\mH_k = \nabla_{\vtheta}\hat{\vy}_k$, and based on assumption of Gaussian (or more broadly, for an exponential family) likelihood, we obtain $\mR_k^{-1} = \nabla_{\hat{\vy}}^2 \mathcal{L}_k$. Consequently, $\mF(\vtheta_k) = \mH_k^{\top} \mR_k^{-1} \mH_k$, where corresponds to the update term in the information filtering covariance update in \autoref{Eq. Algorithm diag cov update}, i.e. $\mF \equiv \mSigma^{-1}$. Therefore, considering Lemma \ref{lemma:modified_kamlan_update}, the update direction in NGD and Kalman filter is equivalent to each other: $\mF(\vtheta_k)^{-1} \nabla_{\vtheta} \mathcal{L}_k \equiv \mSigma_{k} \nabla_{\vtheta} \mathcal{L}_k $.
\end{proof}

\section{Computational Complexity}
\label{Computational Complexity}

To analyze the computational complexity of the proposed algorithms, let $n$ denote the number of trainable parameters and $d_o$ the number of model outputs. Assume that at each iteration, a minibatch of size $m$ is received, for a total of $T$ iterations. Consider a model composed of $L$ layers, where the largest weight matrix has a size of at most $\omega \times \omega$. The computational complexity of the proposed algorithms is detailed below:

\subsection{Frequentist Computational Complexity}
\label{Frequentist Computational Complexity}

Since a forward pass through the model consists of exactly $L$ matrix multiplications, we define $C_1$ as the proportionality constant associated with each matrix multiplication. Similarly, $C_2$ denotes the constant for the backward pass, and $C_3$ corresponds to the computational cost of LU decomposition, as used in PyTorch's \verb|torch.inverse()| function. For the RING and RENG algorithms, let $S$ denote the skip frequency for computing the inverse Fisher matrix, and $K$ represent the number of iterations required for Newton’s method to approximate the inverse.

One iteration of the vanilla natural gradient descent method \citep{martens2015optimizing} involves the following computational steps: a forward pass with complexity ($C_1 L \omega^2 m$), a backward pass costing ($C_2 L \omega^2 m$), and an output sampling step with cost $\mathcal{O}(1)$. An additional backward pass is required to compute the error for each layer, incurring a further cost of ($C_2 L \omega^2 m$). Moreover, three matrix multiplications per layer are performed to compute the activations, error matrices, and batch gradients, resulting in a cost of ($3 C_1 L \omega^2 m$). Each layer also requires two matrix inversions, with a total cost of ($2 C_3 L \omega^3$) and two additional matrix multiplications for computing the final weight update, contributing ($2 C_1 L \omega^2$) to the overall complexity.

For the RING and RENG algorithms, the inverse Fisher matrix is computed once every $S$ steps, resulting in a total of $\frac{T}{S}$ exact computations throughout the training process. During the remaining $ T - \frac{T}{S}$ iterations, an approximate inverse is obtained using the \textit{Lazy Fisher} technique, as described in \autoref{eq:inverse_update}. In RENG, for each of the $\frac{T}{S}$ iterations involving exact inverse computation, the following operations are performed: One forward pass with complexity ($C_1 L \omega^2 m$); two backward passes (double backpropagation) costing ($2 C_2 L \omega^2 m$); one output sampling step with cost ($\mathcal{O}(1)$); an additional backward pass to compute layer-wise error: $C_2 L \omega^2 m$; three matrix multiplications per layer to compute activations, error matrices, and batch gradients: ($3 C_1 L \omega^2 m$); two approximate matrix inversions per layer using Newton’s method: ($2 \psi C_1 L \omega^2 K$); and two final matrix multiplications per layer for the weight update: ($2 C_1 L \omega^2$). Here, $\psi \in \{2,3,4\}$ denotes the number of matrix multiplications required per iteration of Newton’s method, depending on the chosen approximation: quadratic \autoref{eq:quad_newt}, cubic \autoref{eq:cube_newt}, or quartic \autoref{eq:quart_newt}:

Quadratic Approximation:
\begin{align}
  \mX_{k+1} = \mX_k (2 \mI - \mA \mX_k),
  \label{eq:quad_newt}
\end{align}
Cubic Approximation:
\begin{align}
  \mX_{k+1} = \mX_k (3 \mI - 3 \mA \mX_k + (\mA \mX_k)^2),
  \label{eq:cube_newt}
\end{align}
Quartic Approximation:
\begin{align}
  \mX_{k+1} = \mX_k (4 \mI - \mA \mX_k (6 \mI - \mA \mX_k (4 \mI - \mA \mX_k))).
  \label{eq:quart_newt}
\end{align}

Thus, the total time complexity of vanilla natural gradient descent over $T$ iterations is given by $T \times (C_1 L \omega^2 m + C_2 L \omega^2 m + C_2 L \omega^2 m + 3 C_1 L \omega^2 m + 2 C_3 L \omega^3 + 2 C_1 L \omega^2)$ which is of the order $\mathcal{O}(T(C L \omega^2 m + C' L \omega^3))$.

For RENG, the total complexity is given by: $\frac{T}{S} \times (C_1 L \omega^2 m + 2 C_2 L \omega^2 m + C_2 L \omega^2 m + 3 C_1 L \omega^2 m + 2 \psi C_1 L \omega^2 K + 2 C_1 L \omega^2) + \left( T - \frac{T}{S}\right) \times (C_1 L \omega^2 m + 2 C_1 L \omega^2 + 2 C_1 L \omega^2)$ which is of the order $\mathcal{O}\left( \frac{T}{S}(C_1 L \omega^2 m + C_1' L \omega^2 K)\right)$.

For RING, the total complexity is $\frac{T}{S} \times (C_1 L \omega^2 m + C_2 L \omega^2 m + C_2 L \omega^2 m + 3 C_1 L \omega^2 m + 2 \psi C_1 L \omega^2 K + 2 C_1 L \omega^2) + \left( T - \frac{T}{S}\right) \times (C_1 L \omega^2 m + 2 C_1 L \omega^2 + 2 C_1 L \omega^2)$ which is of the order $\mathcal{O}\left( \frac{T}{S}(C_2 L \omega^2 m + C_2' L \omega^2 K)\right)$.

Compared to vanilla natural gradient descent, our proposed methods achieve significant computational savings. By skipping the full Fisher inverse computation every iteration, the leading factor of $T$ in the complexity is effectively reduced to $\frac{T}{S}$. Additionally, replacing the costly matrix inversion (with complexity $\omega^3$) with Newton's iterative method reduces the cost to $\omega^2K$, where $K << \omega$. In practice, this iterative approach scales efficiently for large matrices, especially under low-precision arithmetic (e.g., \verb|bFloat16|). Furthermore, modern GPU architectures enable highly parallel matrix multiplications, with row- and column-wise operations executed concurrently, which makes Newton's method substantially more scalable than traditional LU decomposition-based inversion.

\subsection{Bayesian Computational Complexity}
\label{Bayesian Computational Complexity}

We define $C_1$ as the proportionality constant associated with the cost of matrix multiplication, $C_2$ for Jacobian computation, and $C_3$ for inverting the observation noise covariance matrix. We analyze the computational complexity of both the standard Kalman filter and our proposed Jacobian-regularized Kalman algorithm with diagonal approximation, as introduced by \citep{chang2022diagonal}.
Our empirical observations also demonstrate that throughout the training process, the covariance matrix asymptotically approaches a (block-)diagonal configuration: $\sE_{p^*(\vx,\vy)}[\vsigma] = \text{diag}(\mSigma)$. 
%
%
%
To implement the diagonal approximation in the regularized Kalman algorithm, we use the following update step:
\begin{subequations} \label{Eq. LoKO Updating}
    \begin{align} 
        \tilde{\mK}_k &= \vsigma_{k|k-1} \bullet \mH_k^{\top} \left(\mH_k (\vsigma_{k|k-1} \bullet \mH_k^{\top}) + \mR_k (\mI + \rho \mR_k)^{-1} \right)^{-1}
        \label{Eq. diag gain update}\\
        \vmu_{k} &= \vmu_{k|k-1} + \tilde{\mK}_k (\vy_k - h(\vx_{k}, \vmu_{k|k-1}))
        \label{Eq. Updating Process}\\
        \left(\vsigma_{k}\right)^i &= \left(\vsigma_{k|k-1}\right)^i - \left(\mK_k\right)^{i}_{j} \left(\mH_k\right)^{j}_{i} \left(\vsigma_{k|k-1}\right)^i
        \label{Eq. diag cov update}
    \end{align}
\end{subequations}
where the symbol $\bullet$ represents the transposed Khatri–Rao product, which is essentially the row-by-row Kronecker product of the vector $\vsigma_{k|k-1}$ and matrix $\mH_k^{\top}$. The \autoref{Eq. diag cov update} represents the diagonal update of the covariance matrix, expressed with Einstein notation.
Note that the operations used in \autoref{Eq. diag gain update}, and \autoref{Eq. diag cov update} can be computed efficiently. For example, in PyTorch, they can be seamlessly implemented using the $*$, and $\textbf{einsum}()$ operators, streamlining the computational process.
The analysis is structured around the three main steps of the algorithm: Prediction, Pre-Update, and Update for both standard and our Kalman algorithms.

The computational complexity of both our algorithm and the standard Kalman filter for the prediction step is $\mathcal{O}(1)$.

The pre-update step for Jacobian computation incurs a cost of $C_2 n d_{o}$. Estimating the observation noise covariance involves a cost of $C_1 (d_{o}^2 + n d_{o}^2 + n^2 d_{o})$, while updating the observation noise covariance via exponential averaging has negligible cost, i.e., $\mathcal{O}(1)$. Computing the Kalman gain requires two matrix multiplications and one matrix inversion (of size $d_{o} \times d_o$), with a total cost of $(C_1 n^2 d_{o} + C_3 d_{o}^3 + C_1 n d_{o}^2)$. The parameter update, which involves multiplying the Kalman gain matrix by the output error, costs $C_1 n d_{o}$. The final and most computationally intensive step is the update of the posterior covariance matrix $\mSigma_k$, which requires constructing a full $n \times n$ matrix. This step contributes $(C_1 n^2 d_{o} + C_1 n^3)$ to the overall cost. Therefore, the total time complexity across $T$ iterations is: $T \times (C_2 n d_{o} + C_1 (d_{o}^2 + n d_{o}^2 + n^2 d_{o}) + C_1 n^2 d_{o} + C_3 d_{o}^3 + C_1 n d_{o}^2 + C_1 n d_{o} + C_1 n^2 d_{o} + C_1 n^3)$. Since $d_{o} \ll n$, the dominant terms are $C_1n^2 d_{o}$ and $C_1n^3$, which yields an overall computational complexity of order $\mathcal{O}(C_1 (n^2 d_{o} + n^3))$.

In the case where we employ the diagonal approximation of the covariance matrix, the use of the element-wise Khatri–Rao product replaces matrix–matrix multiplications with more efficient matrix–vector operations. Consequently, the Kalman gain computation now incurs a cost of $2 (C_1 n + C_1 n d_{o}^2 + C_3 d_{o}^3)$, as it includes an additional matrix inversion due to the newly introduced regularization. The parameter update step maintains the same cost as in the full covariance case. 
For the covariance update, which is expressed using Einstein summation notation, each element of the diagonal covariance requires a vector–vector multiplication. Thus, updating all $n$ diagonal elements costs $C_1 n^2$. As a result, the total computational complexity per iteration becomes $\mathcal{O}(C_1 (n^2 + n d_{o}^2))$, which is one order of magnitude lower than that of the standard Kalman update.

\clearpage

\section{Hyperparameters}
\label{Hyperparameters}

The hyperparameters used in vision and language experiments are reported in \autoref{Table: Hyperparameters Vision} and \autoref{Table: Hyperparameters Language}, respectively.

\begin{table}[h]
  \caption{Hyperparameters used for fine-tuning ViT-B16 on various image classification datasets, including CIFAR-10, CIFAR-100, Oxford-IIIT Pet, Food-101, and ImageNet-100. The table lists the hyperparameter settings for our proposed optimization algorithms (RING, RENG, and R-Kalman), as well as for the baseline methods AdamW, Sophia, and NGD.}
  \label{Table: Hyperparameters Vision}
  \centering
  \scriptsize{
  \begin{tabular}{ll|cccccccc}
    \toprule
    \textbf{Algorithm} & \textbf{Hyperparameters} & \textbf{CIFAR-10}  &  \textbf{CIFAR-100} & \textbf{Oxford-IIIT Pet} & \textbf{Food-101} & \textbf{ImageNet-100} \\
    \midrule
    \multirow{6}{*}{\textbf{AdamW}} & Batch Size & \multicolumn{5}{c}{16} \\
            & Num. Epochs & $10$ & $5$ & $10$ & $8$ & $8$ \\
            & Learning Rate $(\times 10^{-4})$ & $0.6$-$0.8$ & $0.4$-$0.6$ & $0.6$-$0.8$ & $0.4$-$0.8$ & $0.6$-$0.8$ \\
            & Warmup Steps & $200$ & $500$ & $500$ & $500$ & $500$ \\
            & LoRA Config. & \multicolumn{5}{c}{$r_q=r_v=4$} \\
            & LoRA $\alpha$ & \multicolumn{5}{c}{$4$} \\ \addlinespace[1ex]
    \midrule
    \multirow{7}{*}{\textbf{Sophia}} & Batch Size & \multicolumn{5}{c}{16} \\
            & Num. Epochs & $10$ & $5$ & $10$ & $8$ & $8$ \\
            & Learning Rate $(\times 10^{-4})$ & $0.6$-$0.8$ & $0.4$-$0.6$ & $0.6$-$0.8$ & $0.4$-$0.8$ & $0.6$-$0.8$ \\
            & Warmup Steps & $200$ & $500$ & $500$ & $500$ & $500$ \\
            & $\beta_1, \beta_2$ & \multicolumn{5}{c}{$0.975, 0.99$} \\
            & LoRA Config. & \multicolumn{5}{c}{$r_q=r_v=4$} \\
            & LoRA $\alpha$ & \multicolumn{5}{c}{$4$} \\
    \addlinespace[1ex]
    \midrule
    \multirow{7}{*}{\textbf{NGD}} & Batch Size & \multicolumn{5}{c}{100} \\
            & Num. Epochs & $3$ & $2$ & $3$ & $2$ & $2$ \\
            & Learning Rate & \multicolumn{5}{c}{$1.0$} \\
            & $\rho$ $(\times 10^{-4})$ & $1.0$ & $1$ & $2$ & $2$ & $0.8$  \\
            & $\phi$ & \multicolumn{5}{c}{$0.995$} \\
            & LoRA Config. & \multicolumn{5}{c}{$r_q=r_v=4$} \\
            & LoRA $\alpha$ & \multicolumn{5}{c}{$4$} \\
    \addlinespace[1ex]
    \midrule
    \multirow{9}{*}{\textbf{RING}} & Batch Size & \multicolumn{5}{c}{100} \\
            & Num. Epochs & $4$ & $1$ & $2$ & $2$ & $2$ \\
            & Learning Rate & \multicolumn{5}{c}{$1.0$} \\
            & $\rho$ $(\times 10^{-4})$ & $10$ & $10$ & $4$ & $5$ & $8$ \\
            & $\phi$ & \multicolumn{5}{c}{$0.99$-$0.995$} \\
            & Skip freq. $(S)$ & $8$ & $8$ & $8$ & $8$ & $8$ \\
            & LoRA Config. & \multicolumn{5}{c}{$r_q=r_v=4$} \\
            & LoRA $\alpha$ & \multicolumn{5}{c}{$4$} \\
    \addlinespace[1ex]
    \midrule
    \multirow{10}{*}{\textbf{RENG}} & Batch Size & \multicolumn{5}{c}{100} \\
            & Num. Epochs & $4$ & $2$ & $3$ & $2$ & $2$ \\
            & Learning Rate & \multicolumn{5}{c}{$1.0$} \\
            & Grad Reg. & $0.05$ & $0.05$ & $0.01$ & $0.01$ & $0.01$ \\
            & $\rho$ $(\times 10^{-4})$ & $40$ & $10$ & $8$ & $2$ & $8$ \\
            & $\phi$ & \multicolumn{5}{c}{$0.99$-$0.995$} \\
            & Skip freq $(S)$ & $8$ & $8$ & $8$ & $8$ & $8$ \\
            & LoRA Config. & \multicolumn{5}{c}{$r_q=r_v=4$} \\
            & LoRA $\alpha$ & \multicolumn{5}{c}{$4$} \\
    \addlinespace[1ex]
    \midrule
    \multirow{7}{*}{\textbf{R-Kalman}} & Batch Size & \multicolumn{5}{c}{1} \\
            & Num. Epochs & $1$ & $1$ & $2$ & $1$ & $1$ \\
            & $\beta$ & $0.96$ & $0.98$ & $0.96$ & $0.98$ & $0.98$ \\
            & $\vsigma_0$ & $0.2$ & $0.2$ & $0.2$ & $0.1$ & $0.1$ \\
            & LoRA Config. & \multicolumn{5}{c}{$r_q=r_v=4$} \\
            & LoRA $\alpha$ & \multicolumn{5}{c}{$4$} \\
    \bottomrule
  \end{tabular}
    {\scriptsize 
    
    Discount factor $\phi$ is used in the Levenberg-Marquardt scheme for updating the damping coefficient \citep{martens2015optimizing}.}
  }
\end{table}

\clearpage

\begin{table}[H]
  \caption{Hyperparameters used for fine-tuning RoBERTa-base on the GLUE language benchmarks, including MNLI-mm, QQP, QNLI, SST-2, CoLA, STS-B, MRPC, and RTE. The table reports the hyperparameter settings for our proposed optimization algorithms (RING, RENG, and R-Kalman) as well as for the baseline methods AdamW, Sophia, and NGD.}
  \label{Table: Hyperparameters Language}
  \centering
  \scriptsize{
  \begin{tabular}{l@{\hskip 1em}p{0.13\linewidth}|cccccccc}
    \toprule
    \textbf{Algorithm} & \textbf{Hyperparameters} & \textbf{MNLI-mm}  &  \textbf{QQP} & \textbf{QNLI} & \textbf{SST-2} & \textbf{CoLA} & \textbf{STS-B} & \textbf{MRPC} & \textbf{RTE} \\
    \midrule
    \multirow{6}{*}{\textbf{AdamW}} & Batch Size & \multicolumn{8}{c}{64} \\
            & Num. Epochs & $8$ & $8$ & $8$ & $10$ & $25$ & $40$ & $40$ & $25$ \\
            & Learning Rate $(\times 10^{-4})$ & $0.4$-$0.8$ & $0.6$-$0.8$ & $1.0$-$2.0$ & $0.8$-$1.2$ & $0.4$-$0.6$ & $0.4$-$1.0$ & $0.8$-$1.0$ & $0.8$-$1.0$ \\
            & Warmup Steps & $500$ & $500$ & $500$ & $500$ & $200$ & $100$ & $200$ & $100$ \\
            & LoRA Config. & \multicolumn{8}{c}{$r_q=r_v=8$} \\
            & LoRA $\alpha$ & \multicolumn{8}{c}{8} \\
            & Max. Seq. Len. & \multicolumn{8}{c}{$512$} \\ \addlinespace[1ex]
    \midrule
    \multirow{7}{*}{\textbf{Sophia}} & Batch Size & \multicolumn{8}{c}{64} \\
            & Num. Epochs & $8$ & $8$ & $8$ & $10$ & $25$ & $40$ & $40$ & $25$ \\
           & Learning Rate $(\times 10^{-4})$ & $0.4$-$0.8$ & $0.6$-$0.8$ & $2.0$-$4.0$ & $0.8$-$1.2$ & $0.6$-$0.8$ & $0.4$-$1.0$ & $0.8$-$1.0$ & $0.8$-$1.0$ \\
            & $\beta_1, \beta_2$ & \multicolumn{8}{c}{$0.965, 0.98$} \\
            & LoRA Config. & \multicolumn{8}{c}{$r_q=r_v=8$} \\
            & LoRA $\alpha$ & \multicolumn{8}{c}{$8$} \\
            & Max. Seq. Len. & \multicolumn{8}{c}{$512$} \\
    \addlinespace[1ex]
    \midrule
    \multirow{7}{*}{\textbf{NGD}} & Batch Size & \multicolumn{8}{c}{64} \\
            & Num. Epochs & $1$ & $1$ & $2$ & $3$ & $6$ & $10$ & $5$ & $5$ \\
            & Learning Rate & \multicolumn{8}{c}{$0.99$-$1.0$} \\
            & $\rho$ $(\times 10^{-4})$ & $1.0$ & $1$ & $2$ & $2$ & $0.8$ & $1.0$ & $0.8$ & $0.4$ \\
            & $\phi$ & \multicolumn{8}{c}{$0.99$-$0.995$} \\
            & LoRA Config. & \multicolumn{8}{c}{$r_q=r_v=8$} \\
            & LoRA $\alpha$ & \multicolumn{8}{c}{$8$} \\
            & Max. Seq. Len. & \multicolumn{8}{c}{$512$} \\
    \addlinespace[1ex]
    \midrule
    \multirow{9}{*}{\textbf{RING}} & Batch Size & $64$ & $64$ & $64$ & $32$ & $64$ & $100$ & $32$ & $32$ \\
            & Num. Epochs & $1$ & $1$ & $2$ & $5$ & $6$ & $10$ & $5$ & $5$ \\
            & Learning Rate & \multicolumn{8}{c}{$0.9$-$1.0$} \\
            & $\rho$ $(\times 10^{-4})$ & $10$ & $10$ & $4$ & $5$ & $8$ & $40$ & $9$ & $10$ \\
            & $\phi$ & \multicolumn{8}{c}{$0.999$-$0.9999$} \\
            & Skip freq $(S)$ & $8$ & $8$ & $8$ & $4$ & $4$ & $4$ & $4$ & $4$ \\
            & LoRA Config. & \multicolumn{8}{c}{$r_q=r_v=8$} \\
            & LoRA $\alpha$ & \multicolumn{8}{c}{$8$} \\
            & Max. Seq. Len. & \multicolumn{8}{c}{$512$} \\
    \addlinespace[1ex]
    \midrule
    \multirow{10}{*}{\textbf{RENG}} & Batch Size & $64$ & $64$ & $64$ & $256$ & $64$ & $100$ & $32$ & $64$ \\
            & Num. Epochs & $1$ & $1$ & $2$ & $5$ & $6$ & $10$ & $5$ & $5$ \\
            & Learning Rate & \multicolumn{8}{c}{$0.9$-$1.0$} \\
            & Grad. regu. & $0.005$ & $0.01$ & $0.01$ & $0.001$ & $0.1$ & $0.001$ & $0.001$ & $0.001$ \\
            & $\rho$ $(\times 10^{-4})$ & $40$ & $10$ & $8$ & $2$ & $8$ & $40$ & $8$ & $10$ \\
            & $\phi$ & \multicolumn{8}{c}{$0.999$-$0.9999$} \\
            & Skip freq $(S)$ & $8$ & $8$ & $8$ & $4$ & $8$ & $4$ & $4$ & $4$ \\
            & LoRA Config. & \multicolumn{8}{c}{$r_q=r_v=8$} \\
            & LoRA $\alpha$ & \multicolumn{8}{c}{$8$} \\
            & Max. Seq. Len. & \multicolumn{8}{c}{$512$} \\
    \addlinespace[1ex]
    \midrule
    \multirow{7}{*}{\textbf{R-Kalman}} & Batch Size & \multicolumn{8}{c}{1} \\
            & Num. Epochs & $1$ & $1$ & $1$ & $2$ & $2$ & $2$ & $4$ & $4$ \\
            & $\beta$ & $0.97$ & $0.97$ & $0.97$ & $0.96$ & $0.98$ & $0.98$ & $0.98$ & $0.98$ \\
            & $\vsigma_0$ & $0.1$ & $0.1$ & $0.1$ & $0.1$ & $0.04$ & $0.05$ & $0.05$ & $0.05$ \\
            & LoRA Config. & \multicolumn{8}{c}{$r_q=r_v=8$} \\
            & LoRA $\alpha$ & \multicolumn{8}{c}{$8$} \\
            & Max. Seq. Len. & \multicolumn{8}{c}{$512$} \\
    \bottomrule
  \end{tabular}
  {\scriptsize 
  
    Discount factor $\phi$ is used in the Levenberg-Marquardt scheme for updating the damping coefficient \citep{martens2015optimizing}.}
  }
\end{table}

\clearpage

\section{Additional Experiments}
\label{sec:Additional Experiments}

We perform extensive empirical evaluations across both image classification and GLUE language benchmarks. Detailed results for vision and language tasks are presented in \autoref{fig:image_results} and \autoref{fig:nlp_results}, respectively. The vision datasets include CIFAR-10/100, Oxford-IIIT Pet, Food-101, and ImageNet-100, while the GLUE benchmarks cover MNLI-mm, QQP, QNLI, SST-2, CoLA, STS-B, MRPC, and RTE subsets. For image classification, we fine-tune a ViT-B16 model pre-trained on ImageNet; for language tasks, we fine-tune RoBERTa-base, pre-trained on large-scale text corpora. The plots show the validation accuracy of our proposed algorithms (RING, RENG, and R-Kalman) compared to AdamW, Sophia, and NGD as a function of training iterations.

\begin{figure}[H]
\begin{flushleft}
    \includegraphics[width=\textwidth]{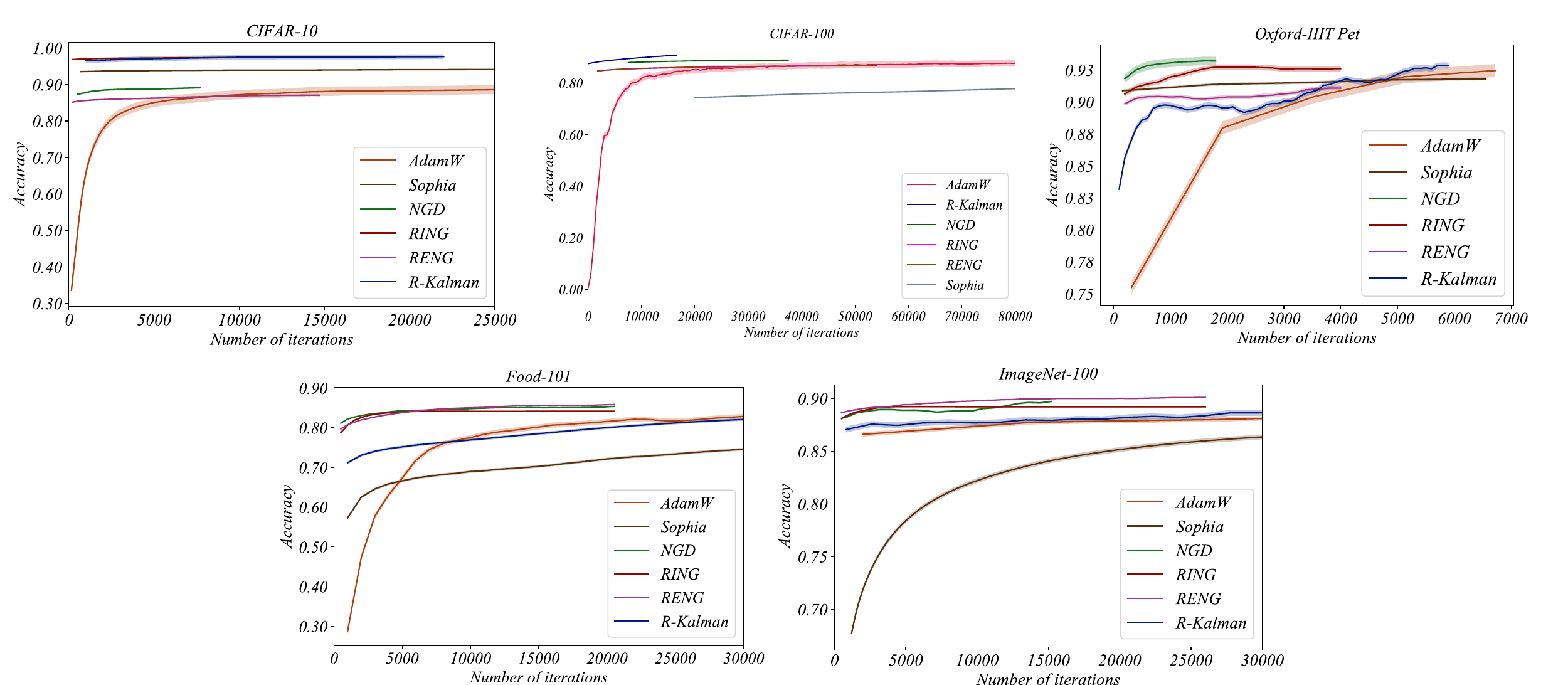}
    \caption{Comprehensive results on the image classification benchmarks, including CIFAR-10/100, Oxford-IIIT Pet, Food-101, and ImageNet-100. The plots show the validation accuracy of our proposed algorithms (RING, RENG, and R-Kalman) compared to AdamW, Sophia, and NGD when fine-tuning ViT-B16. Validation accuracy is plotted as a function of training iterations.}
    \label{fig:image_results}
\end{flushleft}
\end{figure}

\clearpage

\begin{figure}[H]
\begin{flushleft}
    \includegraphics[width=\textwidth]{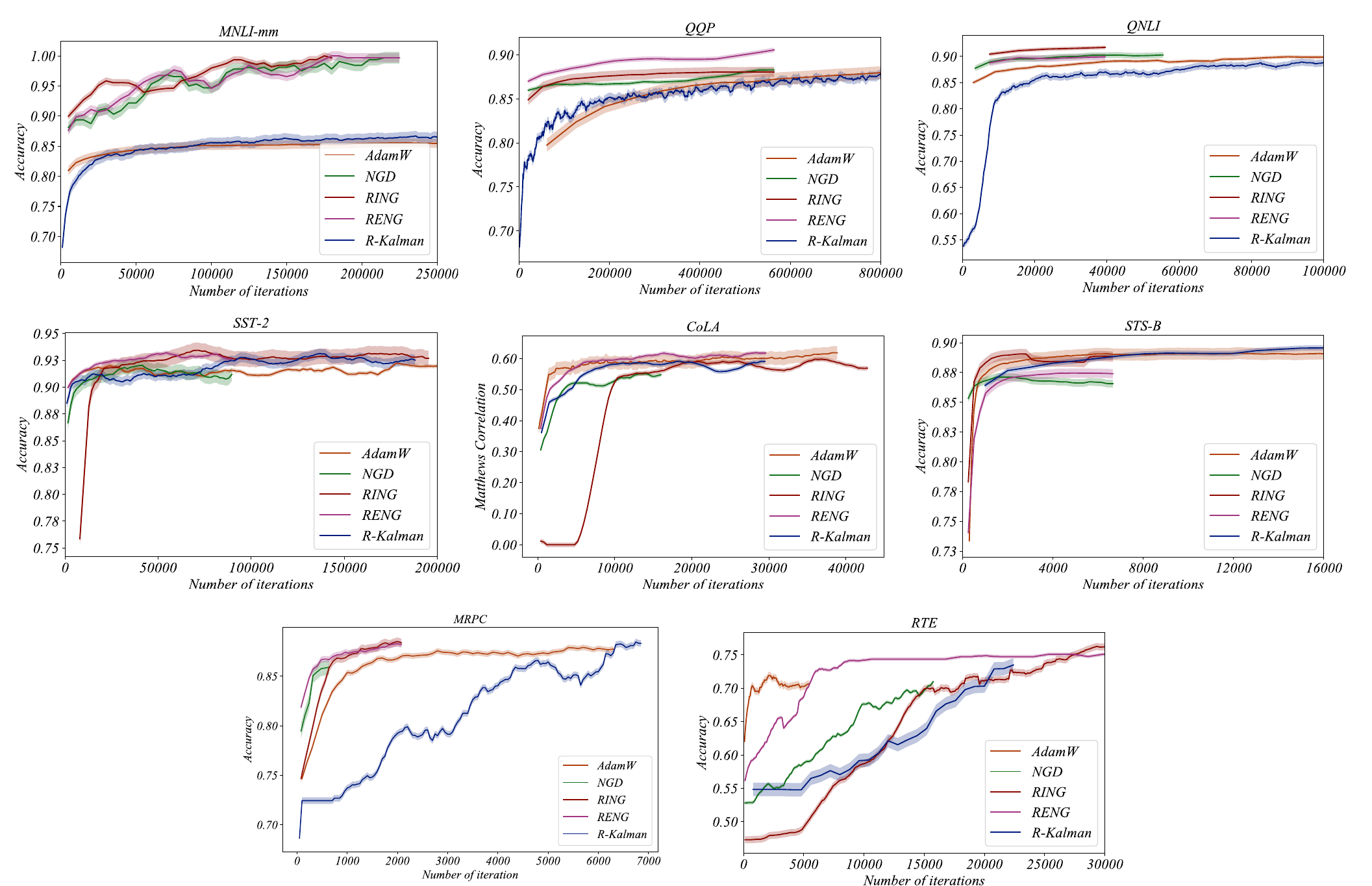}
    \caption{Comprehensive results on the GLUE benchmark, including MNLI-mm, QQP, QNLI, SST-2, CoLA, STS-B, MRPC, and RTE. The plots show the validation accuracy of our proposed algorithms (RING, RENG, and R-Kalman) compared to AdamW and NGD when fine-tuning RoBERTa-base. Validation accuracy is plotted as a function of training iterations.}
    \label{fig:nlp_results}
\end{flushleft}
\end{figure}

\clearpage

\section{Ablation Study}
\label{sec:Ablation Study}

\paragraph{Sensitivity to damping coefficient and gradient regularization coefficient.}
We demonstrate that, within a certain range, varying the gradient regularization coefficient in RING and RENG, as well as the damping coefficient in RENG, leads to convergence toward near-optimal solutions. However, our analysis also reveals a lower bound on these hyperparameters, which may depend on the dataset. It is important to note that, due to our use of the Lazy-Fisher technique, inaccurate estimates of the Fisher information matrix can lead to instability. In particular, setting the damping coefficient too low. For more details, see \autoref{fig:damping_ring} for RING algorithm, and \autoref{fig:Reg_abalations} for RENG.

\begin{figure}[h]
\begin{centering}
    \includegraphics[width=0.8\textwidth]{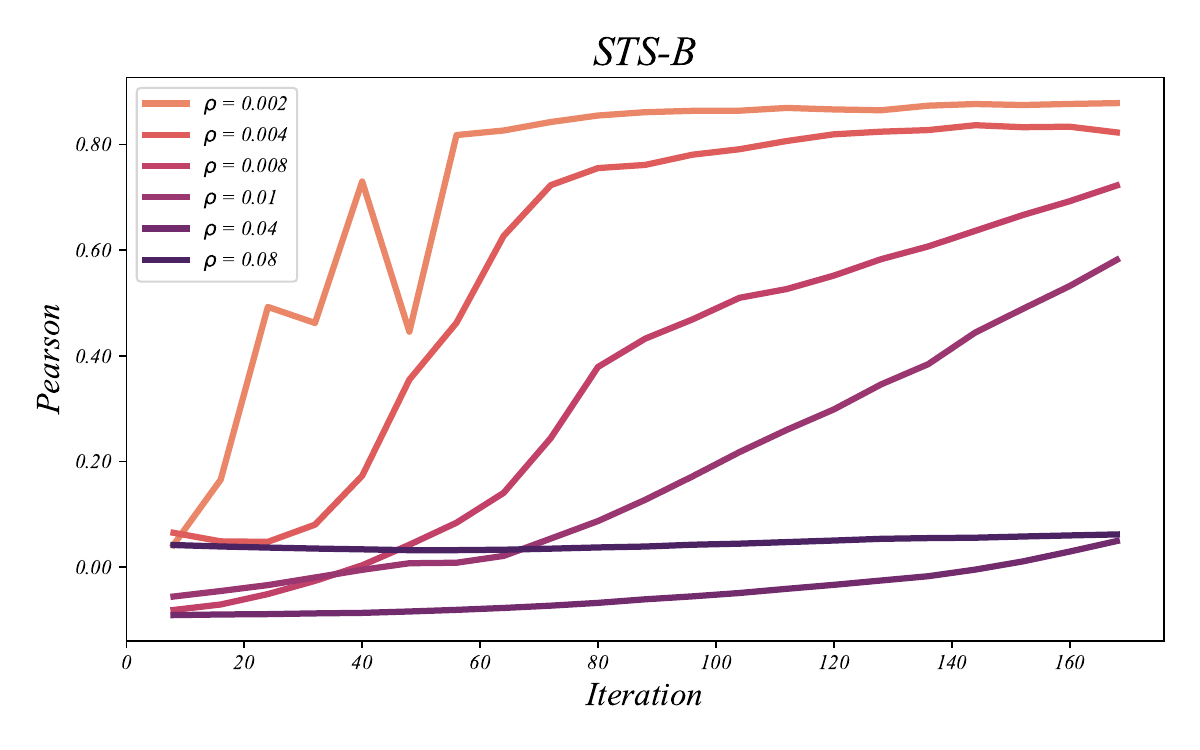}
    \caption{Sensitivity analysis of dampening coefficient for RING.}
    \label{fig:damping_ring}
\end{centering}
\end{figure}

\begin{figure}[h]
\centering
\begin{minipage}[b]{0.49\textwidth}
\centering
\includegraphics[width=\textwidth]{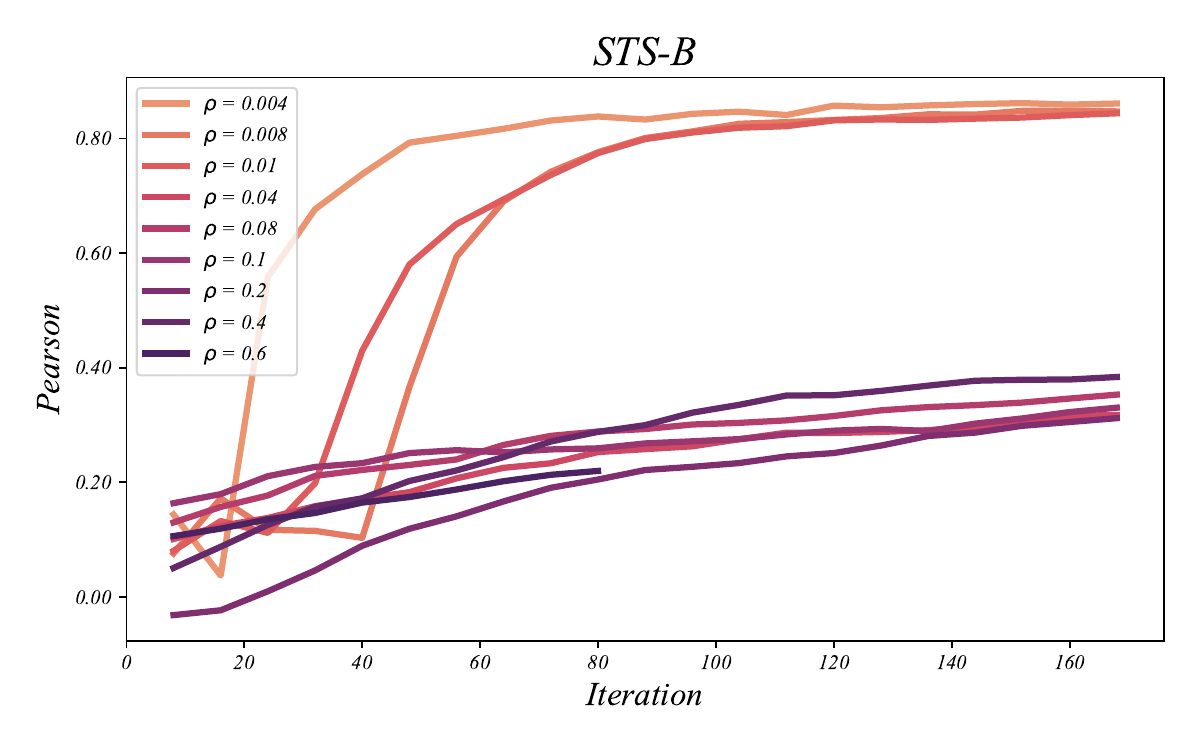}
\label{fig:reng_damp_abalation}
\end{minipage}
\hfill
\begin{minipage}[b]{0.49\textwidth}
\centering
\includegraphics[width=\textwidth]{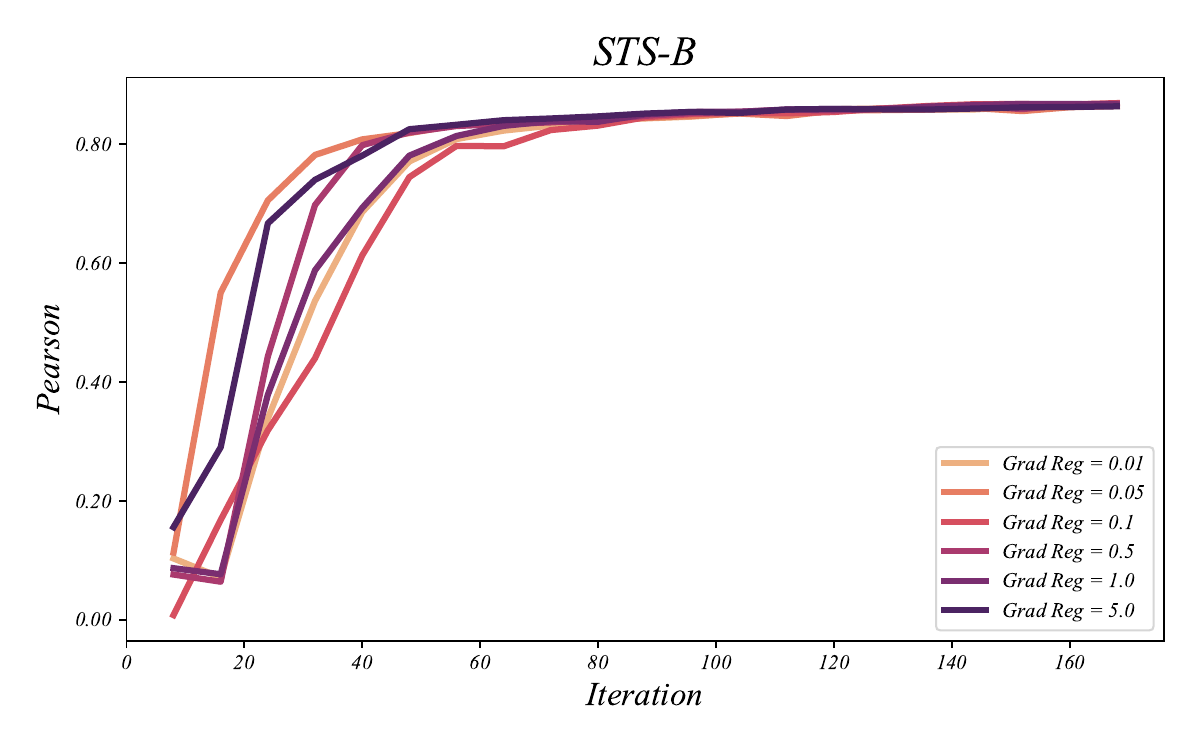}
\label{fig:ring_damp_abalation}
\end{minipage}
\caption{Sensitivity analysis of dampening coefficient and gradient regularization coefficient for RENG.}
\label{fig:Reg_abalations}
\end{figure}

\paragraph{Skip Frequency in Lazy-Fisher.}
We systematically investigated the impact of update frequency in Lazy-Fisher on runtime and performance. The results of these experiments are presented in the \autoref{Table: skip_freq_lazy_fisher_1}, and \autoref{Table: skip_freq_lazy_fisher_2}. The first table presents results on the STS-B dataset, reporting values at iteration 20. The second table shows results on the MNLI-mm dataset, with values reported at iteration 200. The empirical results demonstrate that a skip frequency of 4 for small datasets and 8 for large datasets provides an effective balance between runtime efficiency and performance.

\begin{table}
  \small
  \caption{Performance and runtime on the STS-B dataset under different update frequencies of Lazy-Fisher.}
  \label{Table: skip_freq_lazy_fisher_1}
  \centering
  {\small
  \begin{tabular}{lccccc}
    \toprule
    \textbf{Algorithm} & \textbf{Skip Freq = 2} & \textbf{Skip Freq = 4}  &  \textbf{Skip Freq = 6} & \textbf{Skip Freq = 8} & \textbf{Skip Freq = 10} \\
    \midrule
    \multirow{2}{*}{\textbf{RING}} & Pearson: $35.85$  & Pearson: $79.54$ & Pearson: $65.23$ & Pearson: $58.57$ & Pearson: $69.24$  \\ 
    & Runtime: $2:46 s$  & Runtime: $1:40 s$ & Runtime: $1:20 s$ & Runtime: $1:07 s$ & Runtime: $1:00 s$ \\ \addlinespace[1ex]
    \multirow{2}{*}{\textbf{RENG}}      & Pearson: $40.23$ & Pearson: $52.92$ & Pearson: $48.12$ & Pearson: $40.12$ & Pearson: $33.15$ \\ 
    & Runtime: $1:32 s$ & Runtime: $0:51 s$ & Runtime: $0:43 s$ & Runtime: $0:36 s$ & Runtime: $0:30 s$ \\ 
    \bottomrule
  \end{tabular}
  }
\end{table}

\begin{table}
  \small
  \caption{Performance and runtime on the MNLI-mm dataset under different update frequencies of Lazy-Fisher.}
  \label{Table: skip_freq_lazy_fisher_2}
  \centering
  {\small
  \begin{tabular}{lcccc}
    \toprule
    \textbf{Algorithm} & \textbf{Skip Freq = 4} & \textbf{Skip Freq = 8}  &  \textbf{Skip Freq = 16} & \textbf{Skip Freq = 32} \\
    \midrule
    \multirow{2}{*}{\textbf{RING}} & Accuracy: $48.43 \%$  & Accuracy: $46.87 \%$ & Accuracy: $37.50 \%$ & Accuracy: $39.08 \%$  \\ 
    & Runtime: $31:27 s$  & Runtime: $21:10 s$ & Runtime: $16:45 s$ & Runtime: $13:11 s$ \\ \addlinespace[1ex]
    \multirow{2}{*}{\textbf{RENG}}      & Accuracy: $51.56 \%$ & Accuracy: $51.56 \%$ & Accuracy: $50.00 \%$ & Accuracy: $46.87 \%$ \\ 
    & Runtime: $8:00 s$ & Runtime: $5:15 s$ & Runtime: $4:00 s$ & Runtime: $3:15 s$ \\ 
    \bottomrule
  \end{tabular}
  }
\end{table}

\paragraph{Initial Prior $\mathN(\vmu_0,\mSigma_0)$.}
We find that selecting an appropriate initial prior $\mathN(\vmu_0,\mSigma_0)$ is crucial for ensuring stable convergence of the Kalman algorithm. This selection ideally relies on prior knowledge of the trainable parameters and their associated uncertainties. In the absence of such information, a practical strategy is to adopt widely used initialization schemes such as Xavier \citep{glorot2010understanding} or Kaiming \citep{he2015delving} for training scenarios. For fine-tuning settings, pre-trained parameter values may be used, or alternatively, standard initialization methods employed in parameter-efficient fine-tuning approaches such as LoRA \citep{hu2021lora}.

As for the covariance matrix $\mSigma_0$, it can be initialized with arbitrary positive values, as long as they are neither too small nor excessively large. A common and effective choice is $\mSigma_0 = \sigma_0\mI$, where $0 < \sigma_0 \ll 1$. Importantly, the performance of our algorithm is not highly sensitive to the specific choice of initial values. Regardless of the initialization method, the algorithm consistently converges to a stable performance level over time.
\autoref{fig:sigma0_results} shows the sensitivity of R-Kalman algorithm to different values of $\sigma_0$ in $0 < \sigma_0 \ll 1$.

\begin{figure}
\begin{centering}
    \includegraphics[width=0.8\textwidth]{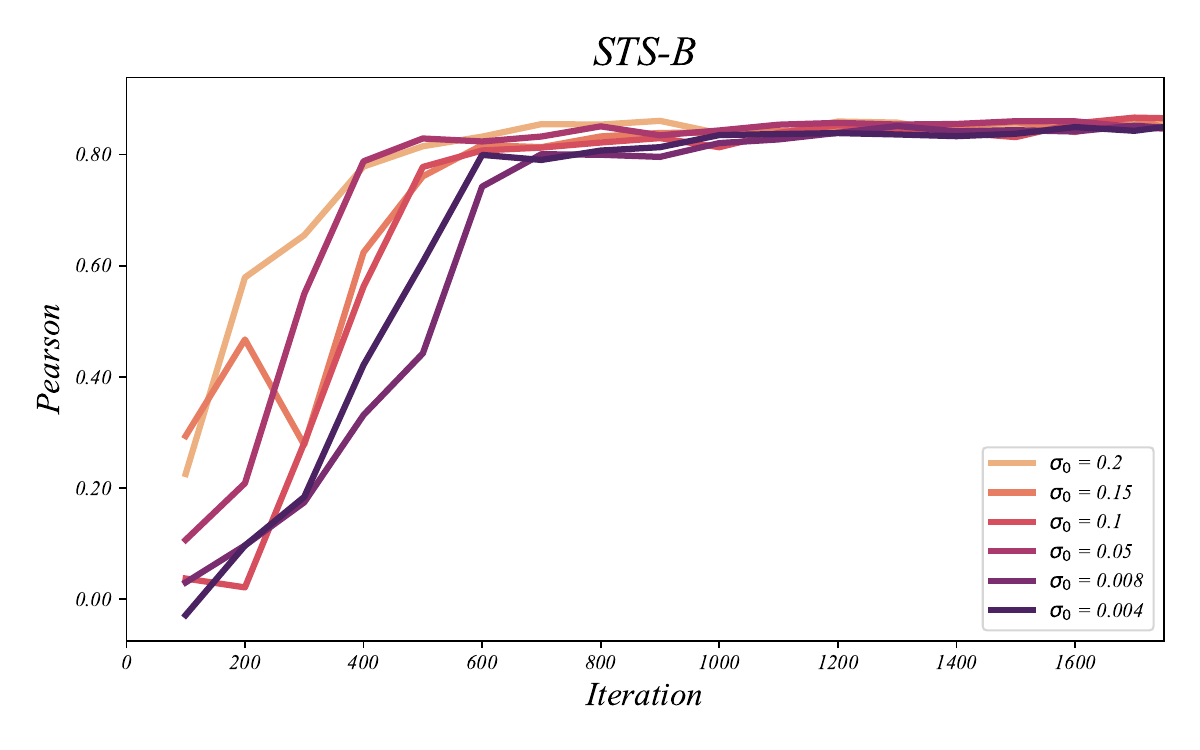}
    \caption{Sensitivity analysis of the R-Kalman algorithm with respect to the initial values of $\sigma_0$. The plot indicates that, as long as $\sigma_0$ lies within the $0 < \sigma_0 \ll 1$ range, the algorithm converges to a consistent optimal solution regardless of its specific initial value.}
    \label{fig:sigma0_results}
\end{centering}
\end{figure}

\paragraph{Sensitivity to $\beta$ in $\mR_k$ Estimation.}
We adopt the method proposed in \cite{abdi2024loko} to estimate the observation noise covariance matrix $\mR_k$:
\begin{align*}
        \mR_k &= \beta \mR_{k-1} + (1-\beta) \hat{\mR}_k , \\
        \hat{\mR}_k &= \left(\vy_k - h(\vx_{k}, \vmu_{k|k-1})\right)\left(\vy_k - h(\vx_{k}, \vmu_{k|k-1})\right)^{\top} + \mH_k \mSigma_{k|k-1} \mH_k^{\top} ,
\end{align*}
where $\beta \in (0,1)$ is forgetting factor.
To evaluate the influence of the hyperparameter \(\beta\) on the performance of the R-Kalman algorithm, we performed a sensitivity analysis by varying \(\beta\) across a range of values. The results demonstrate that setting \(\beta\) too high or too low significantly degrades the algorithm's performance. In contrast, optimal performance is consistently achieved when \(\beta\) lies within the range \(0.94 \leq \beta \leq 0.99\). Moreover, within this interval, the performance is relatively insensitive to the precise choice of \(\beta\). For a detailed comparison, refer to \autoref{fig:beta_results}.

\begin{figure}
\begin{centering}
    \includegraphics[width=0.8\textwidth]{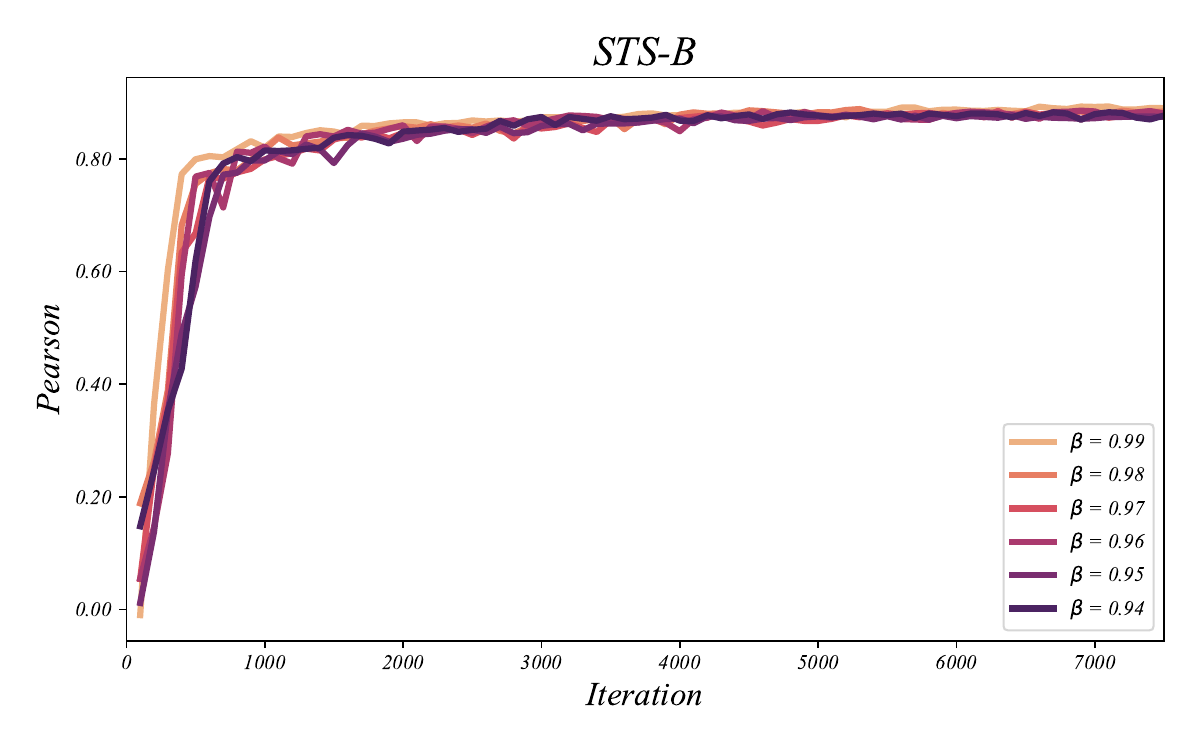}
    \caption{Sensitivity analysis of R-Kalman algorithm to $\beta$ values. The plot shows that, as long as $\beta$ lies within the range \(0.94 \leq \beta \leq 0.99\), the algorithm converges to a consistent optimal solution.}
    \label{fig:beta_results}
\end{centering}
\end{figure}

\newpage
\newpage

\clearpage

\end{document}